\definecolor{DarkBlue}{RGB}{0, 0, 139}
\definecolor{DarkRed}{RGB}{139, 0, 0}
\newcommand{\R}{\mathbb{R}}
\newcommand{\E}{\mathbb{E}}
\newcommand{\A}{\mathcal{A}}
\newcommand{\B}{\mathcal{B}}
\newcommand{\cS}{\mathcal{S}}
\newcommand{\cH}{\mathcal{H}}
\theoremstyle{plain}
\newtheorem{theorem}{Theorem}[section]
\newtheorem{lemma}[theorem]{Lemma}
\newtheorem{proposition}[theorem]{Proposition}
\theoremstyle{definition}
\newtheorem{assumption}[theorem]{Assumption}
\theoremstyle{remark}
\newtheorem{remark}[theorem]{Remark}
\theoremstyle{definition}
\newtheorem{externaltheorem}{Theorem}[section]
\newtheorem{externaldefinition}{Definition}[section]
\numberwithin{equation}{section}
\title{Blind Inverse Game Theory: Jointly Decoding Rewards and Rationality in Entropy-Regularized Competitive Games}
\author{
\begin{tabular}{c@{\hspace{1cm}}c@{\hspace{1cm}}c}
Hamza Virk\thanks{Corresponding Author: \texttt{hvirk2@pride.hofstra.edu}} &
Sandro Amaglobeli &
Zuhayr Syed \\[4pt]
\small Hofstra University & \small Stony Brook University & \small Optimum
\end{tabular}
}
\date{}
\begin{document}

\maketitle
\thispagestyle{empty}

\begin{abstract}
Inverse Game Theory (IGT) methods based on the entropy-regularized Quantal Response Equilibrium (QRE) offer a tractable approach for competitive settings, but critically assume the agents' rationality parameter (temperature $\tau$) is known a priori. When $\tau$ is unknown, a fundamental scale ambiguity emerges that couples $\tau$ with the reward parameters ($\theta$), making them statistically unidentifiable. We introduce Blind-IGT, the first statistical framework to jointly recover both $\theta$ and $\tau$ from observed behavior. We analyze this bilinear inverse problem and establish necessary and sufficient conditions for unique identification by introducing a normalization constraint that resolves the scale ambiguity. We propose an efficient Normalized Least Squares (NLS) estimator and prove it achieves the optimal $\mathcal{O}(N^{-1/2})$ convergence rate for joint parameter recovery. When strong identifiability conditions fail, we provide partial identification guarantees through confidence set construction. We extend our framework to Markov games and demonstrate optimal convergence rates with strong empirical performance even when transition dynamics are unknown.
\end{abstract}

\setcounter{page}{1}
\setcounter{tocdepth}{2}
\tableofcontents
\clearpage

\begin{figure}[ht]
    \centering
    \includegraphics[width=0.8\textwidth]{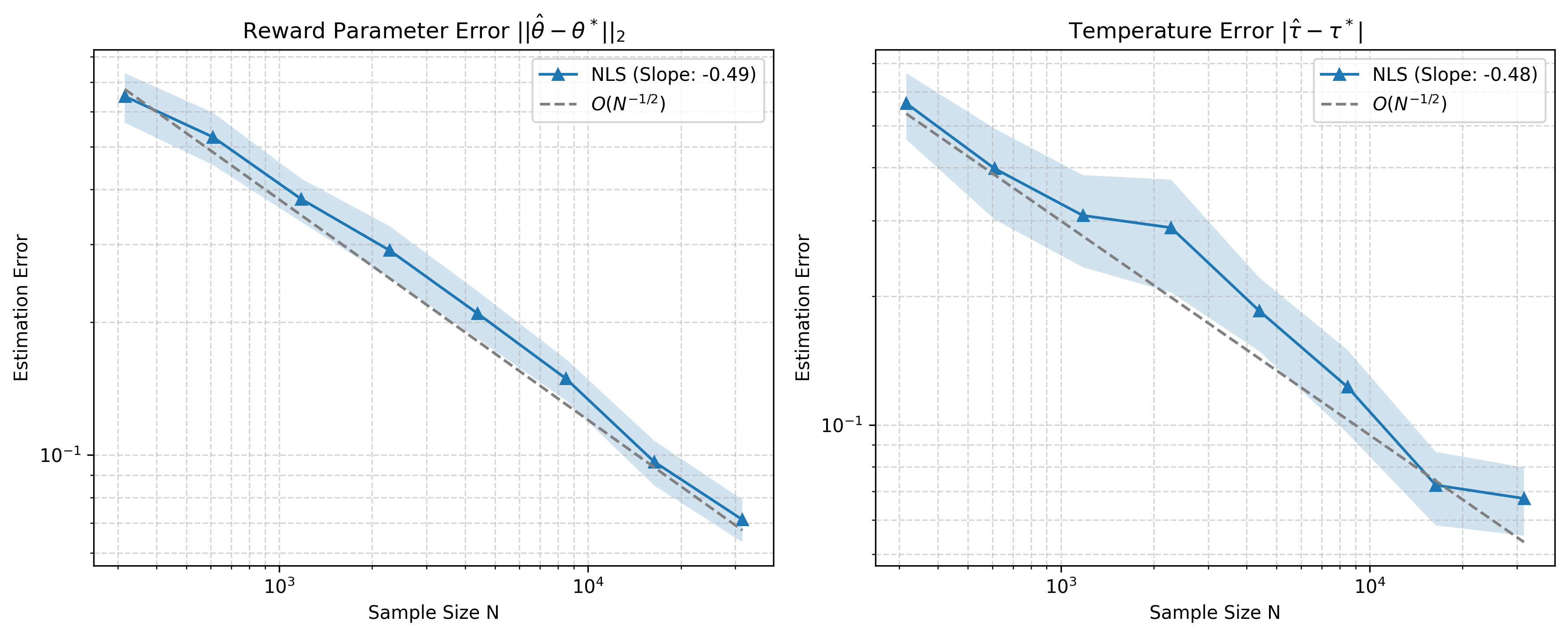}
    \caption{Convergence analysis of the NLS algorithm in Matrix Games. Log-log plots demonstrate that the estimation errors for both the reward parameters ($\theta$, Left) and the unknown temperature ($\tau$, Right) decrease at the optimal parametric rate of $\mathcal{O}(N^{-1/2})$ (dashed line), confirming the statistical efficiency of the Blind-IGT framework.}
    \label{fig:teaser_convergence}
\end{figure}

\section{Introduction}

The statistical inference of underlying objectives from observed strategic behavior is a fundamental challenge spanning artificial intelligence, economics, and multi-agent systems~\citep{Ng&Russell2000, Arora&Doshi2021}. In multi-agent environments, this problem is formalized as Inverse Game Theory (IGT)~\citep{Kuleshov&Schrijvers2015, Yu2019}, where the goal is to identify the payoff structures that rationalize observed equilibrium play.

Recovering rewards in competitive settings, particularly two-player zero-sum games, presents significant statistical difficulties. Standard IGT approaches based on the Nash Equilibrium (NE)~\citep{Nash1951} concept are generally ill-posed, leading to fundamental identification issues~\citep{Ahuja&Orlin2001, Metelli2021}, equilibrium multiplicity, and computational intractability~\citep{Wang&Klabjan2018, Wu2022}.

To address this primary identification challenge, the literature has increasingly adopted models of bounded rationality, notably through entropy regularization~\citep{Mertikopoulos&Sandholm2016}. This smooths the agents' best responses, resulting in a unique equilibrium known as the Quantal Response Equilibrium (QRE)~\citep{McKelvey&Palfrey1995}. The QRE framework has facilitated improved identifiability in single-agent Inverse Reinforcement Learning (IRL)~\citep{Cao2021, Rolland2022} and enabled efficient reward recovery in zero-sum IGT by transforming the problem into a tractable system of linear equations~\citep{Liao2025}.

\subsection{The Challenge: Unknown Rationality and Scale Ambiguity}

Despite these advances, current QRE-based IGT methods rely on the critical and often unrealistic assumption that the entropy regularization parameter, denoted as the temperature $\tau>0$, is known exactly \textit{a priori}. This parameter quantifies the agents' level of stochasticity or bounded rationality. 

The assumption of a known $\tau$ is problematic for statistical inference. In practice, the agents' rationality is unknown and must be estimated from the data. Critically, if $\tau$ is unknown, a fundamental identification problem re-emerges. The reward parameters ($\theta$) and the temperature ($\tau$) are inextricably coupled in the QRE definition, which depends only on the ratio $Q/\tau$. This results in an inherent \textbf{multiplicative scale ambiguity}: if a pair $(\theta, \tau)$ explains the observed behavior, then any scaled pair $(k\theta, k\tau)$ for $k>0$ provides an identical explanation. This ambiguity constitutes a fundamental roadblock because the true parameters become statistically \textbf{unidentifiable}. Intuitively, we cannot distinguish between a highly rational agent (small $\tau$) playing a low-stakes game (small $\theta$) and a highly irrational agent (large $\tau$) playing a high-stakes game (large $\theta$). Consequently, an incorrect specification of $\tau$ leads to a systematic bias in the scale of the recovered rewards, leading to model misspecification and invalidating theoretical guarantees.

\subsection{Our Contributions}

To address this fundamental gap, we introduce \textit{Blind Inverse Game Theory} (Blind-IGT), the first statistical framework enabling the simultaneous recovery of reward parameters ($\theta$) and the unknown temperature ($\tau$) from observed equilibrium behavior in entropy-regularized zero-sum games. This requires solving a challenging bilinear inverse problem. By introducing a normalization constraint on the rewards, we resolve the inherent scale ambiguity and provide a comprehensive theoretical analysis. Our main contributions are as follows. 

\textbf{First}, we provide a comprehensive identifiability analysis where we formally characterize the multiplicative scale ambiguity, establish necessary and sufficient conditions for unique joint identification (Theorem \ref{thm:identifiability}), and provide a rigorous framework for partial identification via confidence sets when these conditions fail (Proposition \ref{prop:confidence_set}). 

\textbf{Second}, we propose an efficient Normalized Least Squares (NLS) estimator with optimal rates, providing rigorous finite-sample guarantees that prove NLS achieves the optimal parametric convergence rate of $\mathcal{O}(N^{-1/2})$ for the joint recovery problem (Theorem \ref{thm:finite_sample}). 

\textbf{Third}, we develop a robust extension to Markov games (Theorem \ref{thm:mg_guarantees}) and empirically demonstrate that NLS tracks the optimal convergence rates even when transition dynamics are unknown and estimated from data (Section \ref{sec:exp_unknown_p}). 

\textbf{Finally}, extensive simulations confirm the statistical efficiency of NLS, its necessity when $\tau$ is unknown, and its robustness to misspecification.

\section{Related Work}

Our work lies at the intersection of inverse optimization, inverse reinforcement learning, and the statistical analysis of entropy-regularized games.

\textbf{Inverse Optimization (IO) and Inverse Reinforcement Learning (IRL).} IO aims to infer objective functions from observed decisions~\citep{Ahuja&Orlin2001, Chan2022}. IRL specifically focuses on recovering reward functions in sequential decision-making~\citep{Ng&Russell2000}. A prominent approach is Maximum Entropy IRL (MaxEnt-IRL)~\citep{Ziebart2008, Wulfmeier2016, Finn2016}.

Recent theoretical work has analyzed reward identifiability in entropy-regularized settings. \citet{Cao2021} and \citet{Rolland2022} demonstrated improved identifiability in single-agent scenarios. Crucially, this prior work addresses an \textit{additive} ambiguity inherent in the Bellman equation (potential shaping). In contrast, our work tackles a \textit{multiplicative} scale ambiguity arising from the unknown temperature in a competitive setting. These represent distinct mathematical challenges.

\textbf{Inverse Game Theory (IGT).} IGT extends IRL to multi-agent systems, aiming to recover the underlying payoff structures that rationalize observed strategic interactions~\citep{Vorobeychik2007, Arora&Doshi2021}. The predominant approach in the literature assumes that the observed behavior corresponds to a standard equilibrium concept, typically the Nash Equilibrium (NE)~\citep{Lin2014, Kuleshov&Schrijvers2015} or the Correlated Equilibrium (CE)~\citep{Waugh2011}. However, these approaches often struggle with non-uniqueness and computational intractability, frequently relying on complex bilevel optimization as discussed in existing work \citep{Wang&Klabjan2018, Wu2022, Konstantakopoulos2017}.

\textbf{Entropy Regularization and QRE.} The Quantal Response Equilibrium (QRE)~\citep{McKelvey&Palfrey1995} models agents playing noisy best responses. Unlike the idealized Nash Equilibrium (NE), QRE provides smooth, unique equilibrium predictions, which facilitates tractable analysis~\citep{Mertikopoulos&Sandholm2016} and often better fits empirical data from human behavior~\citep{Goeree2016, Camerer2003}. The temperature parameter $\tau$ explicitly controls the degree of bounded rationality, allowing QRE to interpolate between uniform random play (as $\tau \to \infty$) and the NE (as $\tau \to 0$). This connection makes QRE a central concept in modern multi-agent reinforcement learning (MARL), where entropy regularization is widely employed to stabilize training and encourage exploration~\citep{Haarnoja2018, Cen2021, Guan2021, Ahmed2019, Zhan2023}.

\paragraph{Relation to prior QRE-based IGT.}
The work most closely related to ours is \citet{Liao2025}, which established a framework for IGT in entropy-regularized zero-sum games based on the linearized QRE constraints we utilize. Our work directly addresses the fundamental limitation of their analysis (the assumption of a known $\tau$) by developing a "blind" framework. We provide a methodology (NLS) for unique point identification of both rewards and the unknown temperature via normalization. Furthermore, we extend the partial identification analysis of \citet{Liao2025} to the bilinear setting where $\tau$ is unknown.

\textbf{Bilinear Inverse Problems.} The structure of Blind-IGT is a bilinear inverse problem, where we seek two vectors whose product explains the observations. Such problems arise in areas like blind deconvolution \citep{Ahmed2014} and self-calibration~\citep{Ling&Strohmer2015}. They are generally non-convex. Our approach leverages the specific structure of the QRE constraints and the normalization constraint to derive an efficient and provably correct estimation method.

\section{Preliminaries: Entropy-Regularized Zero-Sum Games}

We formalize the setting of entropy-regularized two-player zero-sum games and the QRE solution concept.

\subsection{Matrix Games}

A two-player zero-sum matrix game is defined by a triple $(\A, \B, Q)$, where $\A=\{1,\dots,m\}$ and $\B=\{1,\dots,n\}$ are the finite action sets. $Q \in \R^{m \times n}$ is the payoff matrix.

\textbf{Entropy Regularization.} We study the game under entropy regularization~\citep{Mertikopoulos&Sandholm2016}. The regularized game is formulated as:
\begin{equation}
\label{eq:reg_game}
\max_{\mu \in \Delta(\A)} \min_{\nu \in \Delta(\B)} \left\{ \mu^\top Q \nu + \tau \cH(\mu) - \tau \cH(\nu) \right\},
\end{equation}
where $\mu \in \Delta(\mathcal{A})$ and $\nu \in \Delta(\mathcal{B})$ are the mixed strategies for each player, $\tau > 0$ is the regularization parameter (temperature), and $\cH(\pi) = -\sum_i \pi_i \log(\pi_i)$ is the Shannon entropy \citep{Shannon1948a}.

\textbf{Quantal Response Equilibrium (QRE).} The optimization problem \eqref{eq:reg_game} admits a unique solution $(\mu^*, \nu^*)$, known as the QRE~\citep{McKelvey&Palfrey1995}. The QRE is characterized by the following fixed-point equations (logit responses):
\begin{align}
\mu^*(a) &= \frac{\exp(Q(a,\cdot)\nu^*/\tau)}{\sum_{a' \in \A} \exp(Q(a',\cdot)\nu^*/\tau)}, \label{eq:qre_mu} \\
\nu^*(b) &= \frac{\exp(-Q(\cdot,b)^\top\mu^*/\tau)}{\sum_{b' \in \B} \exp(-Q(\cdot,b')^\top\mu^*/\tau)}. \label{eq:qre_nu}
\end{align}

These non-linear equations can be linearized by taking the logarithm and normalizing with respect to a reference action (e.g., action 1). This transformation (detailed in Appendix \ref{app:qre_linearization}) yields the following system of $m+n-2$ linear constraints:
\begin{equation}
\label{eq:linearized_qre}
\begin{cases}
(Q(a,\cdot)-Q(1,\cdot))\nu^* = \tau \cdot \log\frac{\mu^*(a)}{\mu^*(1)}, & \forall a \in \A\setminus\{1\} \\
(Q(\cdot,1)-Q(\cdot,b))^\top\mu^* = \tau \cdot \log\frac{\nu^*(b)}{\nu^*(1)}, & \forall b \in \B\setminus\{1\}
\end{cases}.
\end{equation}

\subsection{Markov Games}

The framework extends to dynamic settings modeled as two-player zero-sum Markov games (MGs). An infinite-horizon discounted MG is defined by the tuple $(\cS, \A, \B, r, P, \gamma)$. $\cS$ is the state space. $\A$ and $\B$ are the action spaces for player 1 and 2, respectively. $r: \cS \times \A \times \B \rightarrow \R$ is the reward function (payoff for player 1), $P(\cdot|s,a,b)$ is the transition probability distribution over the next state $s'$ given the current state $s$ and joint action $(a,b)$. $\gamma \in [0, 1)$ is the discount factor.

Agents interact by choosing stationary policies $\mu: \cS \rightarrow \Delta(\A)$ and $\nu: \cS \rightarrow \Delta(\B)$. In the entropy-regularized setting, the V-function $V^{\mu,\nu}(s)$ and the Q-function $Q^{\mu,\nu}(s,a,b)$ are defined recursively via the Bellman equations:
\begin{align}
Q^{\mu,\nu}(s,a,b) &= r(s,a,b) + \gamma \E_{s' \sim P(\cdot|s,a,b)}[V^{\mu,\nu}(s')], \label{eq:mg_q} \\
V^{\mu,\nu}(s) &= \mu(s)^\top Q^{\mu,\nu}(s) \nu(s) \nonumber \\
& \quad + \tau \cH(\mu(s)) - \tau \cH(\nu(s)). \label{eq:mg_v}
\end{align}
The QRE $(\mu^*, \nu^*)$ is the unique solution to the minimax problem $V^*(s) = \max_\mu \min_\nu V^{\mu,\nu}(s)$. Similar to matrix games, the QRE policies at each state $s$ satisfy the fixed-point equations defined by \eqref{eq:qre_mu} and \eqref{eq:qre_nu}, using the Q-function $Q^*(s)$ in place of the payoff matrix $Q$.

\section{Blind Inverse Game Theory in Matrix Games}
\label{sec:matrix_games}

In this section, we introduce the Blind-IGT problem for matrix games, analyze the inherent ambiguity, establish identifiability conditions, propose the Normalized Least Squares estimator, and analyze the case of partial identification.

\subsection{Problem Formulation and Ambiguity}

The objective of Blind-IGT is to recover the payoff matrix $Q$ and the temperature $\tau>0$ from observations of the equilibrium behavior $(\mu^*, \nu^*)$. We leverage the standard assumption of linear parameterization.

\begin{assumption}[Linear Payoff Functions]
\label{assump:linear_payoff}
There exists a known feature map $\phi: \A \times \B \rightarrow \R^d$ and an unknown parameter vector $\theta^* \in \R^d$ such that $Q(a,b) = \langle \phi(a,b), \theta^* \rangle$. We assume bounded features, $\|\phi(a,b)\|_2 \le L$.
\end{assumption}

Under Assumption \ref{assump:linear_payoff}, the linearized QRE constraints in \eqref{eq:linearized_qre} can be rewritten in a compact matrix-vector form. We define the feature matrices $A(\nu) \in \R^{(m-1) \times d}$ and $B(\mu) \in \R^{(n-1) \times d}$, and the log-ratio vectors $c(\mu) \in \R^{m-1}$ and $d(\nu) \in \R^{n-1}$.

\paragraph{System Construction Details.}
The feature matrices are defined row-wise. For $a \in \{2,\dots,m\}$ (row $a-1$) and $b \in \{2,\dots,n\}$ (row $b-1$):
\begin{align*}
A(\nu)_{a-1} &= \sum_{b' \in \B} \nu(b') (\phi(a,b') - \phi(1,b'))^\top, \\
B(\mu)_{b-1} &= \sum_{a' \in \A} \mu(a') (\phi(a',1) - \phi(a',b))^\top.
\end{align*}
The log-ratio vectors are defined element-wise:
\begin{align*}
c(\mu)_{a-1} = \log(\mu(a)/\mu(1)), \quad
d(\nu)_{b-1} = \log(\nu(b)/\nu(1)).
\end{align*}

Let $X(\mu,\nu) = [A(\nu)^\top, B(\mu)^\top]^\top$ and $y(\mu,\nu) = [c(\mu)^\top, d(\nu)^\top]^\top$. The QRE constraints form a bilinear system of equations:
\begin{equation}
\label{eq:bilinear_system}
X(\mu^*, \nu^*) \theta^* = \tau^* \cdot y(\mu^*, \nu^*).
\end{equation}

\textbf{The Scale Ambiguity.} The system \eqref{eq:bilinear_system} highlights the fundamental identification challenge. If $(\theta^*, \tau^*)$ is a solution, then any scaled pair $(k\theta^*, k\tau^*)$ for $k>0$ is also a solution that induces the exact same QRE $(\mu^*, \nu^*)$.

\subsection{Identifiability Conditions}

To achieve strong identification, we must introduce a constraint that breaks the scale homogeneity. We assume that the scale of the underlying reward parameters $\theta^*$ is known.

\begin{assumption}[Normalization Constraint]
\label{assump:normalization}
The true reward parameter $\theta^*$ satisfies $\|\theta^*\|_2 = C$ for some known constant $C>0$. Without loss of generality, we assume $C=1$.
\end{assumption}

Assumption \ref{assump:normalization} is necessary for the exact recovery of the scales of both $\theta^*$ and $\tau^*$.

\begin{remark}[Robustness to Misspecification]
\label{remark:robustness}
Crucially, we demonstrate in Section \ref{sec:robustness_C} that our approach is highly robust even if the constant $C$ is misspecified. The NLS algorithm still accurately recovers the \textit{direction} of $\theta^*$ (the relative importance of features), which is often sufficient for behavioral modeling.
\end{remark}

We now establish the necessary and sufficient conditions for the unique joint identification of the pair $(\theta^*, \tau^*)$.

\begin{theorem}[Identifiability of Blind-IGT]
\label{thm:identifiability}
Under Assumptions \ref{assump:linear_payoff} and \ref{assump:normalization} (with $C=1$), the pair $(\theta^*, \tau^*)$ is uniquely identifiable from the QRE $(\mu^*, \nu^*)$ if and only if the following two conditions hold:
\begin{enumerate}
    \item \textbf{Rank Condition:} The matrix $X(\mu^*, \nu^*)$ has full column rank, i.e., $\text{rank}(X(\mu^*, \nu^*)) = d$.
    \item \textbf{Non-Uniformity Condition:} The QRE is not the uniform distribution, i.e., $y(\mu^*, \nu^*) \neq 0$.
\end{enumerate}
\end{theorem}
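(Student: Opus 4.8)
The plan is to prove both directions of the biconditional by analyzing when the bilinear system $X(\mu^*,\nu^*)\theta = \tau \cdot y(\mu^*,\nu^*)$ together with the normalization $\|\theta\|_2 = 1$ admits a unique solution $(\theta,\tau)$ with $\tau > 0$. Throughout I write $X = X(\mu^*,\nu^*)$ and $y = y(\mu^*,\nu^*)$, and I use that $(\theta^*,\tau^*)$ with $\tau^* > 0$ and $\|\theta^*\|_2=1$ is known to satisfy the system, so at least one valid solution exists.

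For the sufficiency direction, assume the Rank Condition and the Non-Uniformity Condition hold. Suppose $(\theta,\tau)$ is any solution with $\tau>0$ and $\|\theta\|_2=1$. Since $X\theta^* = \tau^* y$ and $X\theta = \tau y$, linearity gives $X(\tau\theta^* - \tau^*\theta) = \tau\tau^* y - \tau^*\tau y = 0$. By the Rank Condition $X$ has trivial kernel, so $\tau\theta^* = \tau^*\theta$, i.e. $\theta = (\tau/\tau^*)\theta^*$. Taking norms and using $\|\theta\|_2 = \|\theta^*\|_2 = 1$ with $\tau,\tau^* > 0$ forces $\tau = \tau^*$, and hence $\theta = \theta^*$. (The Non-Uniformity Condition is not strictly needed for this direction, but it guarantees the solution is non-degenerate in the sense that $\tau^*$ is pinned down by a nonzero right-hand side; I will note this.)

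For the necessity direction, I argue the contrapositive: if either condition fails, identifiability fails. First, suppose the Non-Uniformity Condition fails, so $y = 0$. Then since $X\theta^* = \tau^* y = 0$, the pair $(\theta^*, \tau)$ satisfies the system for \emph{every} $\tau > 0$ (the right-hand side is zero regardless), so $\tau^*$ is not identified — a one-parameter family of solutions exists. Second, suppose the Rank Condition fails, so $X$ has a nontrivial kernel; pick $v \neq 0$ with $Xv = 0$. I want to perturb $\theta^*$ within the kernel direction while staying on the unit sphere. Consider the curve $\theta(t) = (\theta^* + tv)/\|\theta^* + tv\|_2$ for small $t$; then $X\theta(t) = (1/\|\theta^*+tv\|_2) X\theta^* = (\tau^*/\|\theta^*+tv\|_2)\, y$, so $(\theta(t), \tau(t))$ with $\tau(t) = \tau^*/\|\theta^*+tv\|_2$ solves the system and satisfies the normalization, with $\tau(t) > 0$. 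For $t$ small and $v$ not parallel to $\theta^*$ this gives genuinely distinct solutions; if $v \parallel \theta^*$ then $v$ itself being in the kernel means $X\theta^* = 0$, which (combined with $\tau^* y = X\theta^* = 0$ and $\tau^*>0$) forces $y=0$, reducing to the first case. Hence in all cases non-identifiability follows.

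The main obstacle is the edge case in the necessity argument where the kernel vector $v$ is parallel to $\theta^*$: one must be careful that the perturbation $\theta(t)$ actually traces out distinct points on the sphere and distinct $\tau$ values, and handle the degenerate sub-case cleanly by folding it back into the $y = 0$ scenario. A secondary subtlety is being precise about what "uniquely identifiable from the QRE" means — I will state it as: there is exactly one pair $(\theta,\tau)$ with $\tau>0$ and $\|\theta\|_2=1$ inducing $(\mu^*,\nu^*)$, equivalently satisfying \eqref{eq:bilinear_system} — and make sure the forward map from parameters to QRE is used only through the linearized constraints \eqref{eq:linearized_qre}, which are equivalent to the logit fixed-point equations. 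Everything else is elementary linear algebra.
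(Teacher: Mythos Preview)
Your proof is correct and follows essentially the same approach as the paper. The only notable difference is in the sufficiency direction: the paper constructs the unique solution explicitly via the Moore--Penrose pseudoinverse, writing $\theta^* = \tau^*(X^*)^\dagger y^*$ and then solving for $\tau^*$ via the normalization, whereas you argue directly that any two normalized solutions must coincide using the trivial-kernel property; your necessity argument (perturbation along the kernel, with the $v \parallel \theta^*$ edge case folded back into the $y=0$ scenario) matches the paper's construction essentially verbatim.
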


\begin{proof}[Proof Sketch (Appendix \ref{app:proof_identifiability})]
Sufficiency relies on the Rank Condition ensuring a unique direction for $\theta^*$ via the Moore-Penrose inverse \citep{Penrose1955}. The Normalization Constraint then uniquely determines the scale (and thus $\tau^*$), provided the Non-Uniformity Condition holds (ensuring $\theta^* \neq 0$). Necessity is shown by constructing counterexamples when either condition fails.
\end{proof}

\subsection{Estimation via Normalized Least Squares}

We propose an algorithm to estimate $(\theta^*, \tau^*)$ from $N$ i.i.d. samples $\{(a^k, b^k)\}_{k=1}^N$ drawn from the QRE $(\mu^*, \nu^*)$. We first estimate the QRE using the empirical frequency estimators $(\hat{\mu}, \hat{\nu})$, and construct the estimated matrices $\hat{X} = X(\hat{\mu}, \hat{\nu})$ and $\hat{y} = y(\hat{\mu}, \hat{\nu})$.

We propose the Normalized Least Squares (NLS) estimator. The complete pseudocode is provided in Appendix \ref{app:algorithms}. The core idea is to first estimate the ratio $\theta^*/\tau^*$ using standard least squares, and then use the normalization constraint to disentangle the two parameters.

\subsection{Theoretical Guarantees}

We analyze the finite-sample performance of the NLS estimator. We establish that $(\hat{\theta}, \hat{\tau})$ converge to the true parameters $(\theta^*, \tau^*)$ at the optimal parametric rate of $\mathcal{O}(N^{-1/2})$.

To ensure the stability of the log-ratio vectors $y(\mu, \nu)$, we require that the QRE probabilities are uniformly bounded away from zero.

\begin{assumption}[Soft-Min Gap]
\label{assump:soft_gap}
There exists a constant $\xi > 0$ such that the QRE probabilities satisfy $\min_{a,b}\{\mu^*(a), \nu^*(b)\} \ge \xi$.
\end{assumption}

Let $\sigma_{\min}(X^*)$ denote the minimum singular value of the true matrix $X^*$.

\begin{theorem}[Finite Sample Bounds for Blind-IGT]
\label{thm:finite_sample}
Let the identifiability conditions of Theorem \ref{thm:identifiability} hold. Under Assumptions \ref{assump:linear_payoff}, \ref{assump:normalization} (with $C=1$), and \ref{assump:soft_gap}. Given $N$ samples from the QRE $(\mu^*, \nu^*)$, let $(\hat{\theta}, \hat{\tau})$ be the output of the NLS algorithm. For any $\delta \in (0,1)$, if $N$ is sufficiently large (see Appendix \ref{app:proof_finite_sample} for precise condition), then with probability at least $1-\delta$:
\begin{align}
\|\hat{\theta} - \theta^*\|_2 &\le C_\theta \cdot \sqrt{\frac{(m+n)\log(1/\delta)}{N}}, \\
|\hat{\tau} - \tau^*| &\le C_\tau \cdot \sqrt{\frac{(m+n)\log(1/\delta)}{N}}.
\end{align}
The constants $C_\theta$ and $C_\tau$ depend on the problem parameters $(L, \xi, \tau^*, \sigma_{\min}(X^*))$.
\end{theorem}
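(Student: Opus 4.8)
The plan is to decompose the error into three layers: (i) the statistical fluctuation of the empirical QRE $(\hat\mu,\hat\nu)$ around $(\mu^*,\nu^*)$; (ii) the deterministic propagation of that fluctuation through the maps $(\mu,\nu)\mapsto X(\mu,\nu)$ and $(\mu,\nu)\mapsto y(\mu,\nu)$; and (iii) a perturbation analysis of the normalized least-squares map. Throughout, write $z^*:=\theta^*/\tau^*$. Since $X^*$ has full column rank (Rank Condition), the population identity $X^*\theta^*=\tau^* y^*$ gives $z^*=(X^*)^{+}y^*$ and, using $C=1$ in Assumption \ref{assump:normalization}, $\|z^*\|_2=\|\theta^*\|_2/\tau^*=1/\tau^*$. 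Once $\|\hat\theta_{LS}-z^*\|_2$ is controlled, the outputs $\hat\tau=1/\|\hat\theta_{LS}\|_2$ and $\hat\theta=\hat\tau\,\hat\theta_{LS}$ are handled by elementary scalar and vector norm inequalities.

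\emph{Steps 1--2 (concentration and Lipschitz stability).} The estimators $(\hat\mu,\hat\nu)$ are multinomial frequencies, so a standard concentration inequality (Bernstein's inequality applied coordinate-wise together with a union bound over the $m+n$ coordinates) gives, with probability at least $1-\delta$,
\[
\|\hat\mu-\mu^*\|_1+\|\hat\nu-\nu^*\|_1\;\le\;\varepsilon_N:=c\sqrt{\frac{(m+n)\log(1/\delta)}{N}}
\]
for a universal constant $c$. Requiring $N\gtrsim \xi^{-2}(m+n)\log(1/\delta)$ forces $\varepsilon_N\le\xi/2$, so by Assumption \ref{assump:soft_gap} the empirical probabilities satisfy $\min_{a,b}\{\hat\mu(a),\hat\nu(b)\}\ge\xi/2$; on this "safe region" $\hat y=y(\hat\mu,\hat\nu)$ is well defined. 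Since the rows of $A(\nu)$ and $B(\mu)$ are affine in $\nu$ and $\mu$ with coefficient vectors of norm at most $2L$ (Assumption \ref{assump:linear_payoff}), one gets $\|\hat X-X^*\|_{\mathrm{op}}\le\|\hat X-X^*\|_F\le C_1 L\,(\|\hat\mu-\mu^*\|_1+\|\hat\nu-\nu^*\|_1)$ with $C_1$ depending only on $m,n$; and since $\log(\cdot)$ is $(2/\xi)$-Lipschitz on $[\xi/2,1]$, each coordinate of $c(\hat\mu)-c(\mu^*)$ and $d(\hat\nu)-d(\nu^*)$ is at most $(2/\xi)$ times a sum of two coordinate errors, yielding $\|\hat y-y^*\|_2\le C_2\xi^{-1}(\|\hat\mu-\mu^*\|_1+\|\hat\nu-\nu^*\|_1)$. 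Combining, $\max\{\|\hat X-X^*\|_{\mathrm{op}},\,\|\hat y-y^*\|_2\}\le C_3(L,\xi,m,n)\,\varepsilon_N$.

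\emph{Steps 3--4 (least-squares perturbation and disentangling).} After further enlarging $N$ so that $\|\hat X-X^*\|_{\mathrm{op}}\le\sigma_{\min}(X^*)/2$, Weyl's inequality gives $\sigma_{\min}(\hat X)\ge\sigma_{\min}(X^*)/2$, so $\hat X$ has full column rank and $\hat X^{+}\hat X=I$. Using $y^*=X^*z^*$,
\[
\hat\theta_{LS}-z^*=\hat X^{+}\bigl(\hat y-\hat X z^*\bigr)=\hat X^{+}\bigl((\hat y-y^*)-(\hat X-X^*)z^*\bigr),
\]
hence $\|\hat\theta_{LS}-z^*\|_2\le \sigma_{\min}(\hat X)^{-1}\bigl(\|\hat y-y^*\|_2+\|\hat X-X^*\|_{\mathrm{op}}\|z^*\|_2\bigr)\le \tfrac{2C_3(1+1/\tau^*)}{\sigma_{\min}(X^*)}\,\varepsilon_N=:C_4\,\varepsilon_N$. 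For $N$ large enough that $C_4\varepsilon_N\le 1/(2\tau^*)$ we get $\|\hat\theta_{LS}\|_2\ge 1/(2\tau^*)$ and $\hat\tau\le 2\tau^*$, and the reverse triangle inequality gives
\[
|\hat\tau-\tau^*|=\Bigl|\tfrac{1}{\|\hat\theta_{LS}\|_2}-\tfrac{1}{\|z^*\|_2}\Bigr|\le\frac{\|\hat\theta_{LS}-z^*\|_2}{\|\hat\theta_{LS}\|_2\,\|z^*\|_2}\le 2(\tau^*)^2 C_4\,\varepsilon_N.
\]
Finally $\hat\theta-\theta^*=\hat\tau(\hat\theta_{LS}-z^*)+(\hat\tau-\tau^*)z^*$ gives $\|\hat\theta-\theta^*\|_2\le 2\tau^* C_4\varepsilon_N+\tfrac{1}{\tau^*}\cdot 2(\tau^*)^2 C_4\varepsilon_N=4\tau^* C_4\,\varepsilon_N$. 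Setting $C_\tau=2(\tau^*)^2C_4$ and $C_\theta=4\tau^* C_4$ (each a function of $L,\xi,\tau^*,\sigma_{\min}(X^*)$, with mild $m,n$ dependence through $C_3$) yields the stated bounds, and the accumulated lower bounds on $N$ constitute the "sufficiently large" condition.

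\emph{Main obstacle.} The genuinely delicate point is the boundary singularity of $(\mu,\nu)\mapsto y(\mu,\nu)$ combined with the need to keep $\hat X$ of full column rank: neither the Lipschitz estimate of Step 2 nor the pseudoinverse bound of Step 3 holds unconditionally. One must first carve out a high-probability safe region on which both are valid, and then verify that the three sample-size thresholds — $N\gtrsim\xi^{-2}(m+n)\log(1/\delta)$ for the safe region, a threshold scaling like $\sigma_{\min}(X^*)^{-2}$ for full rank, and one scaling like $(\tau^*)^2$ for the normalization step — can be met simultaneously. Everything else reduces to routine matrix perturbation and scalar bookkeeping; dimension enters only through the $\sqrt{m+n}$ union bound in Step 1 and the benign constants $C_1,C_2$ from converting Frobenius/coordinate-wise bounds, which is why the final rate is $\mathcal{O}\bigl(\sqrt{(m+n)\log(1/\delta)/N}\bigr)$.
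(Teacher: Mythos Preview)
Your proposal is correct and follows essentially the same three-layer architecture as the paper's proof: policy concentration (the paper uses Bretagnolle--Huber--Carol rather than coordinate-wise Bernstein, but the resulting $\varepsilon_N$ is the same), Lipschitz propagation to $(\hat X,\hat y)$, and a least-squares perturbation followed by the normalization step. Your direct identity $\hat\theta_{LS}-z^*=\hat X^{+}\bigl((\hat y-y^*)-(\hat X-X^*)z^*\bigr)$ and the decomposition $\hat\theta-\theta^*=\hat\tau(\hat\theta_{LS}-z^*)+(\hat\tau-\tau^*)z^*$ are slightly more elementary than the paper's citation of Higham and its separate normalization-stability lemma, but they yield the identical constants $C_\tau=2(\tau^*)^2C_{LS}$ and $C_\theta=4\tau^*C_{LS}$.
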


\begin{proof}[Proof Sketch (Appendix \ref{app:proof_finite_sample})]
The proof utilizes concentration inequalities (Lemma \ref{lem:policy_error}) to bound the perturbation of the empirical system ($\hat{X}, \hat{y}$) (Lemma \ref{lem:perturbation_xy}). We then apply standard perturbation theory for least squares~\citep{Higham2002} to bound the error in the intermediate solution $\hat{\theta}_{LS}$. Finally, a stability analysis of the normalization step (Lemma \ref{lem:normalization_stability}) confirms the optimal rate for both $\hat{\theta}$ and $\hat{\tau}$.
\end{proof}

\subsection{Robustness to Misspecified Normalization}
\label{sec:robustness_C}

We analyze the behavior of the NLS estimator when the normalization constant $C$ is misspecified, validating Remark \ref{remark:robustness}.

Suppose the true normalization is $C_{\text{true}}$ but the algorithm is run with $C_{\text{assumed}}$. The NLS algorithm first estimates $\hat{\theta}_{LS} \approx \theta^*/\tau^* = (C_{\text{true}}/\tau^*) \theta^*_{\text{dir}}$, where $\theta^*_{\text{dir}}$ is the normalized direction.

The algorithm then estimates the temperature as:
\begin{equation}
\hat{\tau} = \frac{C_{\text{assumed}}}{\|\hat{\theta}_{LS}\|_2} \approx \frac{C_{\text{assumed}}}{C_{\text{true}}/\tau^*} = \tau^* \cdot \left(\frac{C_{\text{assumed}}}{C_{\text{true}}}\right).
\end{equation}
The estimated temperature scales linearly with the misspecification ratio. The final reward estimate is $\hat{\theta} = \hat{\tau} \hat{\theta}_{LS}$. Crucially, the direction of $\hat{\theta}$ is identical to the direction of $\hat{\theta}_{LS}$, which is an unbiased estimator of the true direction $\theta^*_{\text{dir}}$.

\begin{wrapfigure}{r}{0.55\textwidth}
    \centering
    \vspace{-15pt}
    \includegraphics[width=\linewidth]{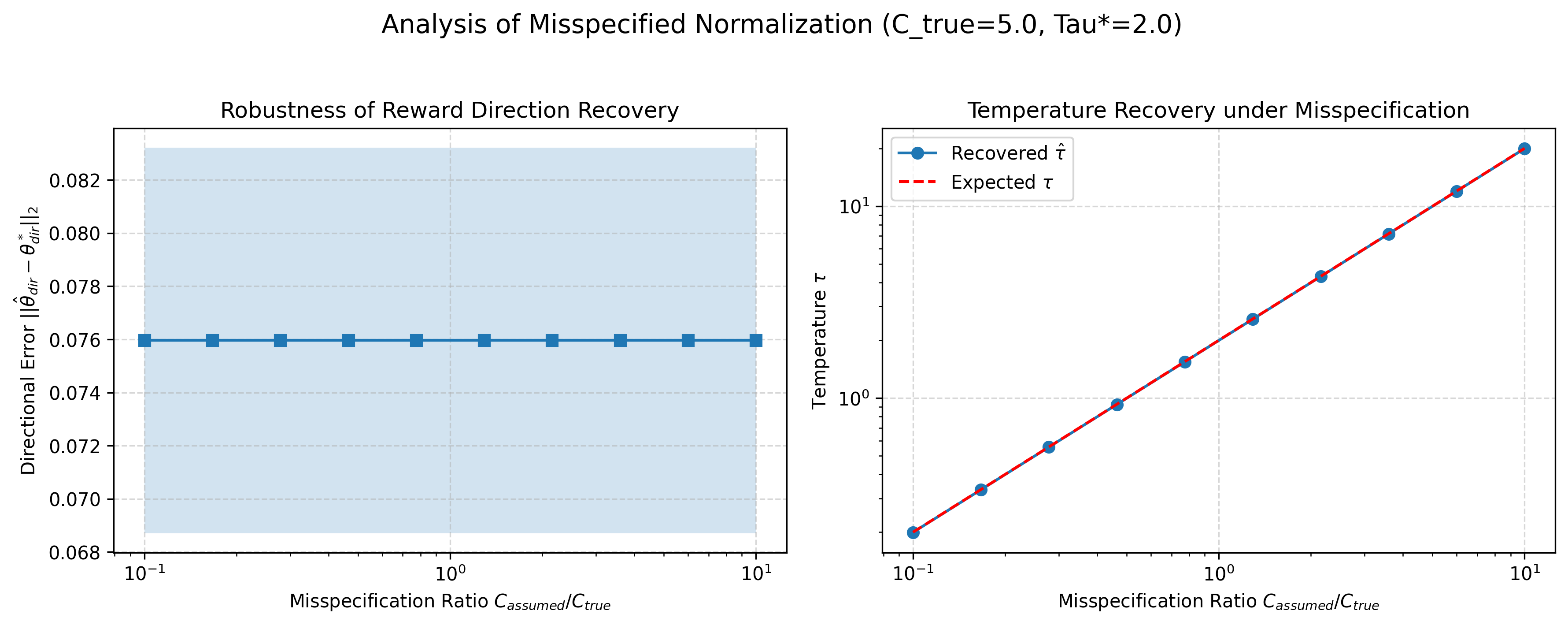}
    \vspace{-10pt}
    \caption{Robustness to Misspecified Normalization Constant $C$. (Left) The directional error of the reward parameters remains low and stable. (Right) The estimated temperature scales linearly with the misspecification ratio, matching the theoretical expectation (dashed red line).}
    \label{fig:misspecified_C}
    \vspace{-10pt}
\end{wrapfigure}

We validate this empirically (Figure \ref{fig:misspecified_C}) using a matrix game setup ($m=n=20, d=8$). We set $C_{\text{true}}=5.0$ and $\tau^*=2.0$, and vary the misspecification ratio from 0.1 to 10. The directional error remains consistently low and stable, demonstrating that the relative importance of the reward features can be reliably recovered even without knowledge of the true utility scale.

\subsection{Partial Identification and Confidence Sets}
\label{sec:partial_identification}

When the Rank Condition of Theorem \ref{thm:identifiability} fails (ie., $\text{rank}(X^*) < d$), the parameters $(\theta^*, \tau^*)$ are not uniquely identifiable, even with the normalization constraint. In this scenario, we shift our focus from point estimation to \textit{partial identification}, aiming to characterize the set of all parameters consistent with the observed QRE.

\textbf{The Identified Set.} We define the identified set $\Theta^*$ as the set of all pairs satisfying the Blind-IGT constraints:
\begin{equation}
\label{eq:identified_set}
\Theta^* = \{(\theta, \tau) \in \R^d \times \R_{>0} : X^* \theta = \tau y^*, \|\theta\|_2 = C \}.
\end{equation}
When the Rank Condition fails, this set contains multiple elements. Unlike the known-$\tau$ case where the feasible set is an affine subspace~\citep{Liao2025}, here $\Theta^*$ is the intersection of a linear subspace (defined by the bilinear constraint) and a non-convex constraint (the normalization sphere).

\textbf{Confidence Set Construction.} In a finite-sample setting, we aim to construct a confidence set $\hat{\Theta}_N$ that contains the true identified set $\Theta^*$ with high probability. We define this set based on the residual of the empirical bilinear system, introducing a threshold parameter $\kappa_N > 0$:
\begin{equation}
\label{eq:confidence_set}
\hat{\Theta}_N = \{(\theta, \tau) : \|\hat{X} \theta - \tau \hat{y}\|_2^2 \le \kappa_N, \|\theta\|_2 = C, \tau > 0 \}.
\end{equation}

To establish the validity of this confidence set, we require an assumption that the true temperature is bounded. This is necessary because the impact of the noise in $\hat{y}$ on the residual scales with $\tau$.

\begin{assumption}[Bounded Temperature]
\label{assump:bounded_tau}
The true temperature parameter is bounded, i.e., $\tau^* \le \tau_{max}$ for some known constant $\tau_{max} > 0$.
\end{assumption}

We can now establish the coverage guarantee for the confidence set $\hat{\Theta}_N$ by appropriately choosing the threshold $\kappa_N$.

\begin{proposition}[Coverage of the Confidence Set]
\label{prop:confidence_set}
Under Assumptions \ref{assump:linear_payoff}-\ref{assump:soft_gap} and Assumption \ref{assump:bounded_tau}. Let $\epsilon_N = \mathcal{O}(\sqrt{(m+n)\log(1/\delta)/N})$ be the policy estimation error rate (Lemma \ref{lem:policy_error}). If we choose the threshold $\kappa_N$ such that
\[
\kappa_N \ge (C_X C + C_Y \tau_{max})^2 \cdot \epsilon_N^2,
\]
where $C_X, C_Y$ are the perturbation constants (Lemma \ref{lem:perturbation_xy}), then with probability at least $1-\delta$:
\[
\Theta^* \subseteq \hat{\Theta}_N.
\]
\end{proposition}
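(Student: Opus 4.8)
The plan is to show that the single high-probability event on which the empirical system $(\hat X,\hat y)$ is close to the true system $(X^*,y^*)$ already forces every element of the identified set $\Theta^*$ into $\hat\Theta_N$. The crucial structural fact is that membership in $\Theta^*$ encodes \emph{exact} feasibility of the bilinear constraint $X^*\theta=\tau y^*$, so for any $(\theta,\tau)\in\Theta^*$ the empirical residual $\hat X\theta-\tau\hat y$ is a pure perturbation term and carries no ``bias''.

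First I would fix an arbitrary $(\theta,\tau)\in\Theta^*$ (if $\Theta^*=\emptyset$ the claim is vacuous) and, using $X^*\theta=\tau y^*$ from~\eqref{eq:identified_set}, write the deterministic decomposition
\[
\hat X\theta-\tau\hat y=(\hat X-X^*)\theta-\tau(\hat y-y^*).
\]
Applying the triangle inequality, submultiplicativity of the operator norm, and the two hard constraints $\norm{\theta}_2=C$ and $\tau\le\tau_{max}$ (Assumption~\ref{assump:bounded_tau}) then gives
\[
\norm{\hat X\theta-\tau\hat y}_2\le C\,\norm{\hat X-X^*}+\tau_{max}\,\norm{\hat y-y^*}_2,
\]
a bound that is now uniform over all of $\Theta^*$.

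Next I would invoke the concentration results. By Lemma~\ref{lem:policy_error}, with probability at least $1-\delta$ the empirical policies concentrate at rate $\epsilon_N=\mathcal O(\sqrt{(m+n)\log(1/\delta)/N})$; on that event Lemma~\ref{lem:perturbation_xy} --- whose proof uses Assumptions~\ref{assump:linear_payoff} and~\ref{assump:soft_gap}, the latter ensuring the log-ratio map $y(\cdot,\cdot)$ is Lipschitz because the QRE probabilities are bounded below by $\xi$ --- yields $\norm{\hat X-X^*}\le C_X\epsilon_N$ and $\norm{\hat y-y^*}_2\le C_Y\epsilon_N$. Substituting into the previous display and squaring gives $\norm{\hat X\theta-\tau\hat y}_2^2\le (C_XC+C_Y\tau_{max})^2\epsilon_N^2\le\kappa_N$ by the stated choice of threshold, so $(\theta,\tau)\in\hat\Theta_N$. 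Since the event in question depends only on $(\hat X,\hat y)$ and not on the chosen pair, the inclusion holds simultaneously for every $(\theta,\tau)\in\Theta^*$, i.e.\ $\Theta^*\subseteq\hat\Theta_N$ with probability at least $1-\delta$.

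I do not expect a serious analytic obstacle here; the heavy lifting is done by Lemmas~\ref{lem:policy_error} and~\ref{lem:perturbation_xy}. The one point requiring care is to notice that no union bound over the uncountable, non-convex set $\Theta^*$ is needed --- the randomness lives entirely in $(\hat X,\hat y)$, and the exact feasibility $X^*\theta=\tau y^*$ together with the constraints $\norm{\theta}_2=C$, $\tau\le\tau_{max}$ turns the residual bound into something independent of $(\theta,\tau)$. Relatedly, Assumption~\ref{assump:bounded_tau} is exactly what is needed to control the term $\tau\,\norm{\hat y-y^*}_2$: without an upper bound on $\tau$, elements of $\Theta^*$ with large temperature could have empirical residual exceeding any fixed $\kappa_N$, breaking coverage.
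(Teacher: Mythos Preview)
Your proposal is correct and follows essentially the same approach as the paper: the identical decomposition $\hat X\theta-\tau\hat y=(\hat X-X^*)\theta-\tau(\hat y-y^*)$, triangle inequality with $\norm{\theta}_2=C$ and $\tau\le\tau_{max}$, and substitution of the perturbation bounds from Lemmas~\ref{lem:policy_error} and~\ref{lem:perturbation_xy}. Your added remarks on why no union bound over $\Theta^*$ is needed and on the precise role of Assumption~\ref{assump:bounded_tau} are correct and would strengthen the exposition.
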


\begin{proof}[Proof Sketch (Appendix \ref{app:proof_confidence_set})]
We analyze the empirical residual $\|\hat{X} \theta^* - \tau^* \hat{y}\|_2$ for any $(\theta^*, \tau^*) \in \Theta^*$. Since $X^* \theta^* = \tau^* y^*$, the residual depends only on the estimation errors $(\hat{X}-X^*)$ and $(\hat{y}-y^*)$. By applying the triangle inequality and leveraging the perturbation bounds (Lemma \ref{lem:perturbation_xy}) along with Assumption \ref{assump:bounded_tau}, we establish an upper bound on the residual, which dictates the choice of $\kappa_N$.
\end{proof}

Proposition \ref{prop:confidence_set} guarantees that the constructed confidence set $\hat{\Theta}_N$ reliably captures all feasible reward and temperature parameters consistent with the observed behavior, providing a comprehensive solution to the Blind-IGT problem even when strong identifiability fails.

\section{Extension to Markov Games}

We extend the Blind-IGT framework to entropy-regularized zero-sum Markov games.

\subsection{Problem Formulation and Identifiability}

In the Markov game setting, we aim to recover the reward function $r$ and the temperature $\tau^*$ from observed trajectories. We adopt an assumption common in the theoretical analysis of RL with function approximation~\citep{Jin2020}.

\begin{assumption}[Linear Q-Functions]
\label{assump:linear_q}
There exists a feature map $\phi: \cS \times \A \times \B \rightarrow \R^d$ such that the unique QRE Q-function $Q^*(s,a,b)$ can be represented as $Q^*(s,a,b) = \langle \phi(s,a,b), \theta^* \rangle$. We assume a normalization constraint: $\|\theta^*\|_2 = R$ for a known constant $R$. We assume bounded features $\|\phi\|_2 \le L$.
\end{assumption}

We adopt Assumption \ref{assump:linear_q} as a necessary starting point for rigorous theoretical analysis in dynamic settings. While our primary theoretical analysis (Theorem \ref{thm:mg_guarantees}) assumes the transition dynamics $P$ are known for analytical tractability, our algorithm is readily applicable when $P$ is estimated from data, as demonstrated empirically in Section \ref{sec:exp_unknown_p}.

At each state $s$, the QRE policies $(\mu^*(s), \nu^*(s))$ satisfy the linearized constraints analogous to \eqref{eq:linearized_qre}. We define the local matrices $X(s)$ and vectors $y(s)$. The constraints are:
\begin{equation}
X(s; \mu^*, \nu^*) \theta^* = \tau^* \cdot y(s; \mu^*, \nu^*), \quad \forall s \in \cS.
\end{equation}

\textbf{Identifiability.} To identify $\theta^*$, we aggregate the constraints across all states. Let $X^*$ be the matrix formed by stacking $X(s)^*$ vertically, and similarly for $y^*$.

\begin{proposition}[Identifiability in Markov Games]
\label{prop:mg_identifiability}
Under Assumption \ref{assump:linear_q} (with $R=1$), the pair $(\theta^*, \tau^*)$ is uniquely identifiable if the aggregated matrix $X^*$ has full column rank (rank $d$) and $y^* \neq 0$.
\end{proposition}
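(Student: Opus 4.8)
The plan is to reduce the claim to a short linear-algebraic argument on the aggregated bilinear system $X^*\theta^* = \tau^* y^*$, essentially replaying the proof of Theorem \ref{thm:identifiability} at the level of the stacked state-wise constraints. First I would make the constraint system explicit: under Assumption \ref{assump:linear_q}, the linearized QRE conditions at each state $s$ read $X(s;\mu^*,\nu^*)\,\theta^* = \tau^*\, y(s;\mu^*,\nu^*)$, and stacking over $s \in \cS$ yields $X^*\theta^* = \tau^* y^*$ with $X^* \in \R^{M \times d}$ for an appropriate $M$ (the total number of pooled constraints) and $y^* \in \R^M$. Identifiability then amounts to the statement that the unique pair $(\theta,\tau) \in \R^d \times \R_{>0}$ with $\|\theta\|_2 = 1$ satisfying $X^*\theta = \tau y^*$ is $(\theta^*,\tau^*)$.

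Second, I would use the rank condition to pin down the direction of $\theta^*$. Since $\mathrm{rank}(X^*) = d$, the linear map $\theta \mapsto X^*\theta$ is injective, so $\theta^*/\tau^*$ is the \emph{unique} solution of $X^* z = y^*$, given in closed form by the Moore--Penrose expression $z^* = (X^{*\top}X^*)^{-1} X^{*\top} y^*$ (this is exactly the population version of the least-squares step in Algorithm \ref{alg:nls}). I would then verify $z^* \neq 0$: if $z^* = 0$ then $X^{*\top} y^* = 0$, i.e.\ $y^*$ is orthogonal to $\mathrm{col}(X^*)$; but $y^* = X^*(\theta^*/\tau^*) \in \mathrm{col}(X^*)$, which forces $y^* = 0$ and contradicts the non-uniformity hypothesis $y^* \neq 0$. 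Hence $z^*$ is a well-defined nonzero vector, and its direction is identified from the observable pair $(X^*, y^*)$.

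Third, I would invoke the normalization constraint to fix the scale. Writing $\theta^* = \tau^* z^*$ and imposing $\|\theta^*\|_2 = 1$ with $\tau^* > 0$ gives $\tau^* = 1/\|z^*\|_2$ and $\theta^* = z^*/\|z^*\|_2$, both uniquely determined. For the uniqueness claim itself, take any admissible pair $(\theta,\tau)$ with $\|\theta\|_2 = 1$, $\tau > 0$, and $X^*\theta = \tau y^*$. Combining with $X^*\theta^* = \tau^* y^*$ gives $X^*(\tau\theta^* - \tau^*\theta) = 0$, hence $\tau\theta^* = \tau^*\theta$ by full column rank; taking Euclidean norms and using $\|\theta\|_2 = \|\theta^*\|_2 = 1$ together with $\tau,\tau^* > 0$ yields $\tau = \tau^*$, and then $\theta = \theta^*$. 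This establishes sufficiency. I would close with a one-line remark that, once $\tau^*$ is recovered and the transition kernel $P$ is known, the reward function $r$ itself follows from the Bellman equation \eqref{eq:mg_q}.

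I expect the only genuine subtlety — and the step I would be most careful about — is the aggregation over states: one must make sure the state-wise blocks $X(s;\mu^*,\nu^*)$ legitimately stack into a finite (or otherwise well-defined, e.g.\ restricted to the support of the equilibrium occupancy measure) matrix $X^*$, and emphasize that "full column rank of the \emph{aggregate}" is the correct pooled condition, strictly weaker than demanding rank $d$ at any single state. Everything downstream is a verbatim transcription of the matrix-game argument, because once $Q^*(s,a,b) = \langle \phi(s,a,b),\theta^*\rangle$ is assumed, the dependence of the constraints on $\theta^*$ is linear and the nonlinear Bellman recursion plays no further role in the identification argument.
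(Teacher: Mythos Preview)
Your proposal is correct and matches the paper's approach essentially verbatim: the paper's proof of Proposition~\ref{prop:mg_identifiability} simply observes that stacking the state-wise constraints yields an aggregated bilinear system $X^*\theta^* = \tau^* y^*$ of the same form as in the matrix-game case, and then invokes the sufficiency argument of Theorem~\ref{thm:identifiability} (pseudoinverse to fix direction, non-uniformity to ensure $z^* \neq 0$, normalization to fix scale) applied to the aggregated matrices. Your write-up is in fact more detailed than the paper's own appendix, which just points back to Appendix~\ref{app:proof_identifiability}; the explicit uniqueness step and the remark that full column rank of the \emph{aggregate} is strictly weaker than requiring it at any single state are nice additions.
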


\subsection{Estimation Algorithm and Guarantees}

We adapt the NLS algorithm. We first estimate the policies $(\hat{\mu}, \hat{\nu})$ from the trajectories. If the dynamics are unknown, we estimate $\hat{P}$ from the observed transitions. We construct the aggregated matrices $\hat{X}$ and $\hat{y}$, and apply the NLS procedure. Once $\hat{\theta}$ and $\hat{\tau}$ are estimated, we recover the rewards using the Bellman equation \eqref{eq:mg_q}.

We provide theoretical guarantees assuming known dynamics, requiring sufficient coverage of the state space.

\begin{assumption}[Coverage]
\label{assump:coverage}
Let $d^*(s)$ be the stationary distribution induced by the QRE. Assume $d^*(s) \ge C_{\min} > 0$ for all $s \in \cS$.
\end{assumption}

\begin{theorem}[Sample Complexity for MGs]
\label{thm:mg_guarantees}
Under Assumptions \ref{assump:soft_gap} (extended to MGs), \ref{assump:linear_q} (with $R=1$), and \ref{assump:coverage}, and assuming the identifiability conditions of Proposition \ref{prop:mg_identifiability} hold and $P$ is known. Let $K$ be the total number of state-action samples. If $K$ is sufficiently large, then with probability at least $1-\delta$:
\begin{align}
\|\hat{\theta} - \theta^*\|_2 &= \mathcal{O}\left(\sqrt{\frac{|\cS|(m+n)\log(|\cS|/\delta)}{K}}\right), \\
|\hat{\tau} - \tau^*| &= \mathcal{O}\left(\sqrt{\frac{|\cS|(m+n)\log(|\cS|/\delta)}{K}}\right), \\
\|\hat{r} - r^*\|_\infty &= \mathcal{O}\left(\sqrt{\frac{|\cS|(m+n)\log(|\cS|/\delta)}{K}}\right).
\end{align}
The constants depend on $L, \xi, \tau^*, \sigma_{\min}(X^*), C_{\min}, \gamma$.
\end{theorem}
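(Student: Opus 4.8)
The plan is to reduce the Markov-game problem to the matrix-game analysis of Theorem~\ref{thm:finite_sample} applied to the \emph{aggregated} linear system, and then to propagate the resulting parameter errors through the Bellman reward-recovery step. The structural starting point is that, at each state $s$, the entropy-regularized QRE is exactly a matrix-game QRE with payoff matrix $Q^*(s)$; hence the linearized constraints $X(s;\mu^*,\nu^*)\,\theta^* = \tau^*\, y(s;\mu^*,\nu^*)$ hold exactly, and stacking over $s$ yields $X^*\theta^* = \tau^* y^*$. Crucially, each $X(s)$ and $y(s)$ depends only on $\phi$ and the local policies $(\mu^*(s),\nu^*(s))$ (the continuation value is already folded into $Q^*$), so knowledge of $P$ is not needed to build the system and is used only in the final reward-recovery step. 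Under Proposition~\ref{prop:mg_identifiability} this pins down $(\theta^*,\tau^*)$, and the embedded NLS recipe of Algorithm~\ref{alg:nls} is well posed.

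The first block of steps mirrors the matrix-game proof. (i) Per-state policy concentration: under Assumption~\ref{assump:coverage}, a state $s$ is visited $\Omega(K C_{\min})$ times with high probability; I would establish this visitation bound via a Chernoff/Bernstein argument (or a Markov-chain blocking argument if one does not want to assume regeneration), then invoke Lemma~\ref{lem:policy_error} conditionally on the visit count, and union-bound over the $|\cS|$ states to get $\max_s \|\hat\mu(s)-\mu^*(s)\|_1,\ \max_s\|\hat\nu(s)-\nu^*(s)\|_1 = \mathcal{O}\!\big(\sqrt{(m+n)\log(|\cS|/\delta)/(KC_{\min})}\big)$. (ii) Aggregated perturbation: applying Lemma~\ref{lem:perturbation_xy} state-by-state (the soft-min gap $\xi$ controlling the Lipschitz constant of the log-ratios in $y(s)$, and $\|\phi\|_2\le L$ controlling $X(s)$), squaring the per-state Frobenius/Euclidean errors and summing over $s$ gives $\|\hat X - X^*\|_F,\ \|\hat y - y^*\|_2 = \mathcal{O}\!\big(\sqrt{|\cS|(m+n)\log(|\cS|/\delta)/K}\big)$, absorbing $L,\xi,C_{\min}$ into the constant. (iii) NLS error: requiring $K$ large enough that $\|\hat X - X^*\|_2 \le \tfrac12\sigma_{\min}(X^*)$ makes $\hat X$ full column rank and lets standard least-squares perturbation theory bound $\|\hat\theta_{LS} - \theta^*/\tau^*\|_2$; the normalization-stability argument (Lemma~\ref{lem:normalization_stability}), valid since $\|\theta^*/\tau^*\|_2 = 1/\tau^*>0$ and $y^*\neq 0$, then converts this into the stated bounds on $\|\hat\theta-\theta^*\|_2$ and $|\hat\tau-\tau^*|$.

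For the reward bound, write $\hat r(s,a,b) - r^*(s,a,b) = \big(\hat Q(s,a,b)-Q^*(s,a,b)\big) - \gamma\,\E_{s'\sim P(\cdot|s,a,b)}\!\big[\hat V(s')-V^*(s')\big]$, which introduces no extra error since $P$ is known. Then $\|\hat Q - Q^*\|_\infty \le L\|\hat\theta-\theta^*\|_2$, and $\hat V(s)-V^*(s)$ decomposes into a bilinear term $\hat\mu(s)^\top\hat Q(s)\hat\nu(s) - \mu^*(s)^\top Q^*(s)\nu^*(s)$, bounded by a triangle-inequality decomposition using $\|Q^*\|_\infty\le L$ and the $\ell_1$ policy errors, plus entropy terms $(\hat\tau-\tau^*)(\cH(\hat\mu(s))-\cH(\hat\nu(s)))$ and $\tau^*(\cH(\hat\mu(s))-\cH(\mu^*(s)))$ etc., each bounded using $|\hat\tau-\tau^*|$, $\cH \le \log\max(m,n)$, and Lipschitzness of $\cH$ in $\ell_1$ on simplex slices with coordinates $\ge \xi/2$ (soft-min gap). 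Since $\hat V$ is computed in one shot from $\hat Q,\hat\mu,\hat\nu,\hat\tau$ rather than by value iteration, there is no $(1-\gamma)^{-1}$ blow-up: $\|\hat V - V^*\|_\infty$, and hence $\|\hat r - r^*\|_\infty$, inherits the $\mathcal{O}\!\big(\sqrt{|\cS|(m+n)\log(|\cS|/\delta)/K}\big)$ rate with the constant carrying a $(1+\gamma)$ factor.

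I expect the main obstacle to be step (i): unlike the i.i.d. matrix-game case, trajectory samples are dependent, so turning the coverage assumption into a high-probability per-state lower bound on the visit count, and then applying the i.i.d. concentration of Lemma~\ref{lem:policy_error} conditionally in a rigorous way, requires either a mixing-time/blocking argument or an explicit ergodicity assumption. A secondary bookkeeping point is tracking the $|\cS|$ dependence — the union bound contributes the $\log|\cS|$, the aggregation of per-state norms contributes the $|\cS|$ inside the square root, and one must verify these do not degrade $\sigma_{\min}(X^*)$ (aggregation of rows, if anything, only helps). Everything downstream of step (i) is a routine concatenation of the lemmas already used in Theorem~\ref{thm:finite_sample}.
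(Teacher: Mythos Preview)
Your proposal is correct and follows essentially the same route as the paper: per-state policy concentration under the coverage assumption (the paper similarly hand-waves the dependence issue by invoking Markov-chain concentration or a generative model), aggregation of the per-state $X(s),y(s)$ perturbations into $\sqrt{|\cS|}$-scaled bounds, the matrix-game NLS perturbation analysis via Lemmas~\ref{lem:perturbation_xy} and~\ref{lem:normalization_stability}, and then the same bilinear-plus-entropy decomposition of $\hat V - V^*$ for reward recovery. Your identification of the trajectory-dependence in step (i) as the main technical soft spot is apt and matches where the paper's own proof is least rigorous.
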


\begin{proof}[Proof Sketch (Appendix \ref{app:proof_mg})]
We first establish uniform convergence of policy estimates across states. The bounds for $\hat{\theta}$ and $\hat{\tau}$ follow the analysis of Theorem \ref{thm:finite_sample} applied to the aggregated system. The key challenge is bounding the reward recovery error. This is achieved by establishing the Lipschitz continuity of the V-function with respect to its components ($Q, \mu, \nu, \tau$), thereby controlling the error propagation through the Bellman equation.
\end{proof}

\begin{remark}[Unknown Dynamics]
\label{remark:unknown_dynamics}
Theorem \ref{thm:mg_guarantees} provides guarantees assuming access to the true dynamics $P$. When $P$ is unknown and estimated as $\hat{P}$, the error analysis must account for the uncertainty in $\hat{P}$. However, we demonstrate empirically in Section \ref{sec:exp_unknown_p} that the optimal $\mathcal{O}(K^{-1/2})$ rate is robustly achieved when estimating $P$ from the same trajectories.
\end{remark}

\section{Experiments}
\label{sec:experiments}

We conduct a comprehensive empirical evaluation of the Blind-IGT framework and the NLS algorithm. Our experiments aim to: (1) validate the theoretical convergence rates; (2) demonstrate the necessity of Blind-IGT when the temperature is unknown; and (3) assess robustness to model misspecification and unknown dynamics. (See Appendix \ref{app:exp_details} for detailed setup).

\subsection{Experimental Setup}

\textbf{Matrix Games.} We generate random zero-sum matrix games with $m=n=10$ actions and feature dimension $d=5$. Features $\phi(a,b)$ and the true parameter $\theta^*$ are sampled from a standard normal distribution, with $\theta^*$ normalized to $\|\theta^*\|_2=1$. The true temperature is set to $\tau^*=2.0$.

\textbf{Markov Games.} We simulate a zero-sum Markov Game with $|\cS|=8$ states, $m=n=5$ actions, and discount factor $\gamma=0.9$. We adopt the Linear Q-function parameterization (Assumption \ref{assump:linear_q}) with $d=6$. The true temperature is set to $\tau^*=1.5$.

\noindent
All results are averaged over 50 independent trials.

\subsection{Validation of Convergence Rates}

We validate the finite-sample guarantees established in Theorem \ref{thm:finite_sample} and Theorem \ref{thm:mg_guarantees}.

\textbf{Matrix Games Results.} Figure \ref{fig:teaser_convergence} shows the estimation errors for $\hat{\theta}$ and $\hat{\tau}$. On a log-log scale, the empirical slopes are -0.49 for $\theta$ and -0.48 for $\tau$. These results closely match the theoretical prediction of -0.5.

\textbf{Markov Games Results.} Figure \ref{fig:convergence_markov} presents the results for the dynamic setting (assuming known P). The empirical slopes for the Q-parameters ($\hat{\theta}$) and the recovered reward function ($\hat{r}$) are -0.52, and -0.54 for temperature ($\hat{\tau}$). This strongly validates Theorem \ref{thm:mg_guarantees}.

\begin{figure}[t]
    \centering
    \includegraphics[width=\columnwidth]{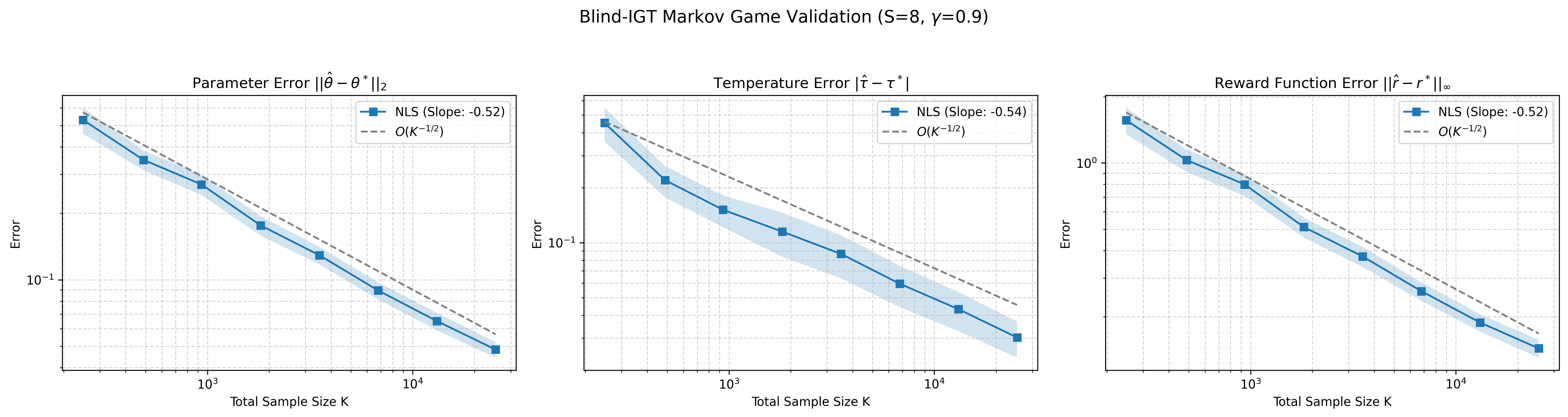}
    \caption{Convergence analysis in Markov Games (Theorem \ref{thm:mg_guarantees}). The empirical slopes for Q-parameters ($\theta$), temperature ($\tau$), and the recovered reward function ($r$) closely match the theoretical $\mathcal{O}(K^{-1/2})$ rate.}
    \label{fig:convergence_markov}
\end{figure}

\begin{table}[t]
\caption{Comparison of Blind-IGT (NLS) vs. Standard IGT \citet{Liao2025} with known/misspecified temperature ($\tau^*=2.0$). Blind-IGT achieves near-oracle performance without prior knowledge of $\tau^*$.}
\label{tab:comparison}
\centering
\footnotesize
\setlength{\tabcolsep}{4pt}
\begin{tabular}{@{}llcc@{}}
\toprule
Method & $\tau$ Assumed & Error $\|\hat{\theta}-\theta^*\|_2$ & Std Dev \\
\midrule
Blind-IGT  & Estimated & \textbf{0.1238} & 0.0556 \\
\midrule
Standard IGT & 1.0 (Misspec.) & 0.5027 & 0.0305 \\
Standard IGT & 2.0 (Oracle) & 0.1396 & 0.0567 \\
Standard IGT & 4.0 (Misspec.) & 1.0447 & 0.1242 \\
\bottomrule
\end{tabular}
\end{table}

\subsection{The Necessity of Blind-IGT}

We investigate the performance of standard IGT methods when the assumption of a known temperature is violated. We compare Blind-IGT (which estimates $\tau$) against Standard IGT (which assumes a fixed $\tau$) in a matrix game where $\tau^*=2.0$. We consider the Oracle case ($\tau_{\text{assumed}}=2.0$) and two misspecified cases ($\tau_{\text{assumed}}=1.0$ and $4.0$). We use $N=10000$ samples. The results are summarized in Table \ref{tab:comparison}. When the temperature is misspecified, Standard IGT exhibits significant estimation errors. For instance, assuming $\tau$ = 1.0 leads to an error almost 4 times higher than the Oracle. Crucially, Blind-IGT, without any prior knowledge of $\tau^*$, achieves an error of 0.1238, closely matching the Oracle performance. Oracle uses the same least-squares core as Blind-IGT and differs only in the final normalization. In our runs, Blind-IGT’s norm-based scaling produces accuracy comparable to the Oracle, indicating that estimating $\tau$ via normalization is competitive with using the true $\tau^*$. This experiment strongly validates the core motivation of our framework.

\subsection{Robustness to Unknown Dynamics in Markov Games.} \label{sec:exp_unknown_p}
We investigate the performance when the transition dynamics $P$ are unknown and must be estimated from data, relaxing the assumption made for Theorem \ref{thm:mg_guarantees}.

We use the Markov Game setup ($|\cS|=8, m=n=5, d=6$). We apply the NLS approach, estimating the transition dynamics $\hat{P}$ via Maximum Likelihood Estimation (with Laplace smoothing) from the observed trajectories. We compare this approach (Estimated-P) against an Oracle that uses the true dynamics (Known-P) during the reward recovery step.

\begin{wrapfigure}{r}{0.45\textwidth}
    \centering
    \vspace{-15pt}
    \includegraphics[width=\linewidth]{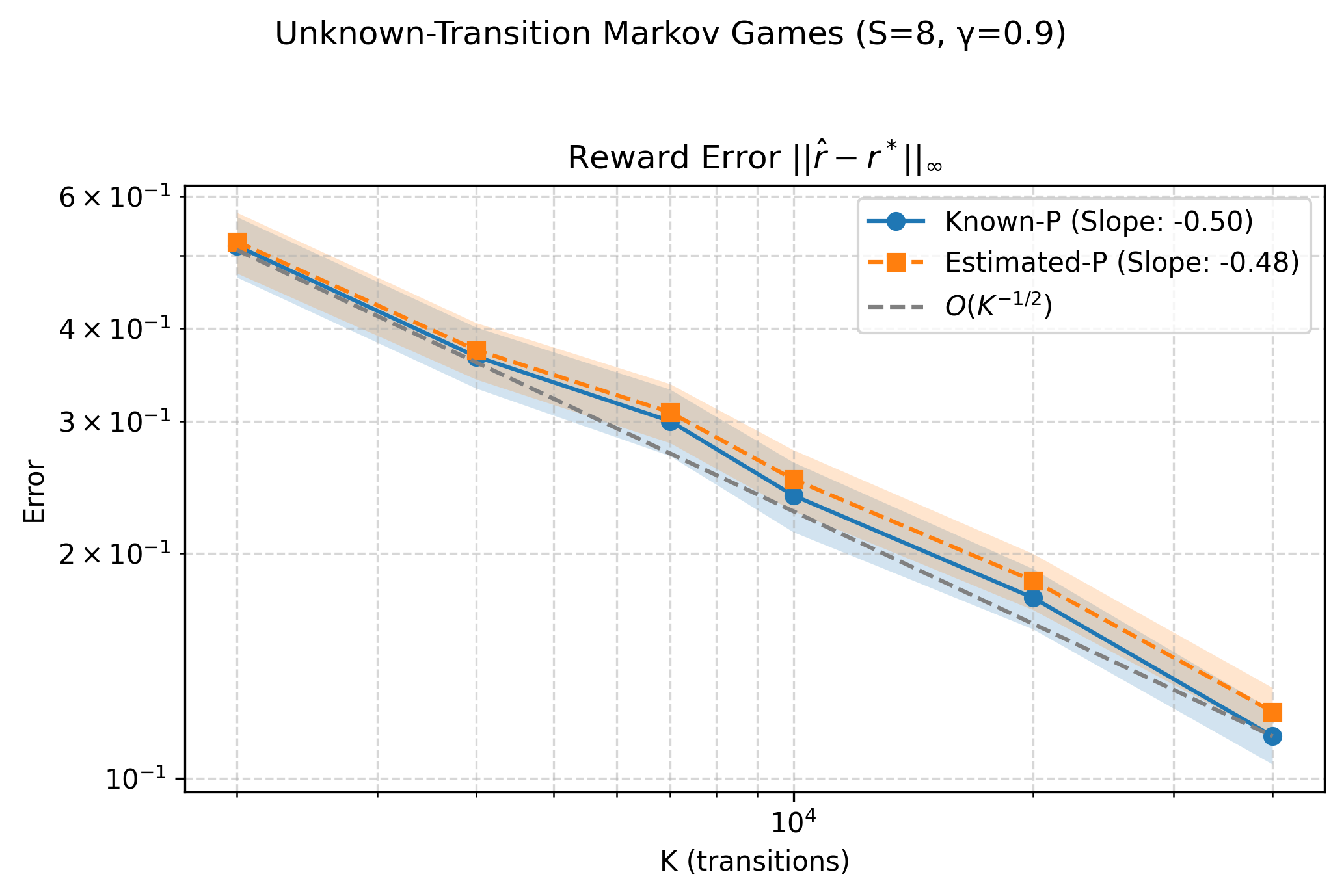}
    \vspace{-10pt}
    \caption{Robustness to Unknown Dynamics in Markov Games. The reward recovery error when estimating $P$ (Estimated-P) closely tracks the error when $P$ is known (Known-P). Both achieve near optimal $\mathcal{O}(K^{-1/2})$ rate (empirical slopes $-0.48$ and $-0.50$, respectively).}
    \label{fig:unknown_p}
    \vspace{-10pt}
\end{wrapfigure}

Figure \ref{fig:unknown_p} shows the reward recovery error ($||\hat{r} - r^*||_\infty$). Crucially, the error for the Estimated-P approach resembles the Known-P oracle. The overhead introduced by estimating the dynamics is negligible, confirming the robustness of the framework in settings with unknown transitions.

\section{Conclusion}

This work establishes a new statistical foundation for Inverse Game Theory and multi-agent Inverse Reinforcement Learning by removing the restrictive assumption of known agent rationality. By rigorously addressing the bilinear coupling between rewards ($\theta$) and temperature ($\tau$), we resolved the fundamental scale ambiguity that previously precluded identification in entropy-regularized competitive games. Our analysis provides a complete characterization of identifiability through the statistically optimal NLS estimator, which achieves $\mathcal{O}(N^{-1/2})$ convergence rates.

\clearpage 

\bibliography{references}

\newpage
\appendix

\section{Algorithms}
\label{app:algorithms}

We present the full pseudocode for the algorithms proposed in this work.

\vspace{20pt}
\noindent\begin{minipage}{\textwidth}
\begin{algorithm}[H]
    \caption{Normalized Least Squares (NLS) for Matrix Games}
    \label{alg:nls}
    \SetAlgoLined
    \DontPrintSemicolon
    \KwIn{Data $\{(a^k, b^k)\}_{k=1}^N$, features $\phi$, Norm $C$}
    \KwOut{Estimated parameters $(\hat{\theta}, \hat{\tau})$}
    \tcc{1. Estimate QRE from observed frequencies}
    $(\hat{\mu}, \hat{\nu}) \gets$ frequency estimators from data\;
    \tcc{2. Construct regression matrices}
    $\hat{X} \gets X(\hat{\mu}, \hat{\nu})$;\quad $\hat{y} \gets y(\hat{\mu}, \hat{\nu})$\;
    \tcc{3. Solve least squares problem}
    $\hat{\theta}_{LS} \gets (\hat{X}^\top \hat{X})^{-1} \hat{X}^\top \hat{y}$\;
    \tcc{4. Recover temperature and reward parameters}
    $\hat{\tau} \gets C / \|\hat{\theta}_{LS}\|_2$;\quad $\hat{\theta} \gets \hat{\tau} \cdot \hat{\theta}_{LS}$\;
    \KwRet{$(\hat{\theta}, \hat{\tau})$}
\end{algorithm}

\vspace{20pt}

\begin{algorithm}[H]
    \caption{Normalized Least Squares for Markov Games}
    \label{alg:blind_igt_mg}
    \SetAlgoLined
    \DontPrintSemicolon
    \KwIn{Trajectories, features $\phi$, Norm $R$, Dynamics $P$ (or estimate)}
    \KwOut{$(\hat{\theta}, \hat{\tau}, \hat{r})$}
    \tcc{1. Estimate policies from trajectories}
    $(\hat{\mu}(s), \hat{\nu}(s)) \gets$ frequency estimators for all $s \in \cS$\;
    \tcc{2. Estimate dynamics if unknown}
    \If{$P$ unknown}{
        $\hat{P} \gets$ MLE from observed transitions\;
    }
    \tcc{3. Construct aggregated system}
    $\hat{X} \gets$ stack $X(s; \hat{\mu}, \hat{\nu})$ for all $s$;\quad $\hat{y} \gets$ stack $y(s; \hat{\mu}, \hat{\nu})$ for all $s$\;
    \tcc{4. Apply NLS procedure}
    $\hat{\theta}_{LS} \gets (\hat{X}^\top \hat{X})^{-1} \hat{X}^\top \hat{y}$;\quad
    $\hat{\tau} \gets R / \|\hat{\theta}_{LS}\|_2$;\quad
    $\hat{\theta} \gets \hat{\tau} \cdot \hat{\theta}_{LS}$\;
    \tcc{5. Recover rewards via Bellman equation}
    $\hat{Q}(s,a,b) \gets \langle \phi(s,a,b), \hat{\theta} \rangle$ for all $(s,a,b)$\;
    Compute $\hat{V}(s)$ from $\hat{Q}, \hat{\mu}, \hat{\nu}, \hat{\tau}$\;
    $\hat{r}(s,a,b) \gets \hat{Q}(s,a,b) - \gamma \E_{s'}[\hat{V}(s')]$\;
    \KwRet{$(\hat{\theta}, \hat{\tau}, \hat{r})$}
\end{algorithm}
\end{minipage}

\clearpage

\section{Notations}
\label{app:notation}

We provide a summary of the key mathematical notation used in the paper and the supplementary materials.

\begin{table}[h]
\centering
\caption{Summary of Notation}
\label{tab:notation_summary}
\small
\renewcommand{\arraystretch}{0.92}
\begin{tabular}{@{}ll@{}}
\toprule
\textbf{Symbol} & \textbf{Description} \\
\midrule
\multicolumn{2}{l}{\textbf{General}} \\
$\R^d$ & $d$-dimensional Euclidean space. \\
$\|\cdot\|_2, \|\cdot\|_1, \|\cdot\|_\infty$ & L2 (Euclidean), L1, and L-infinity norms for vectors. \\
$\|\cdot\|_{op}, \|\cdot\|_{F}$ & Operator (spectral) norm and Frobenius norm for matrices. \\
$\sigma_{\min}(X)$ & Minimum singular value of matrix $X$. \\
$X^\dagger$ & Moore-Penrose pseudoinverse of matrix $X$. \\
$\Delta(\mathcal{X})$ & Probability simplex over the set $\mathcal{X}$. \\
$\cH(\pi)$ & Shannon entropy of the distribution $\pi$. \\
\midrule
\multicolumn{2}{l}{\textbf{Game Setup}} \\
$\A, \B$ & Action spaces for Player 1 ($m=|\A|$) and Player 2 ($n=|\B|$). \\
$\cS$ & State space (for Markov Games). \\
$Q$ & Payoff matrix (Matrix Games) or Q-function (Markov Games). \\
$r,\, \tau,\, \gamma,\, P$ & Reward function, temperature, discount factor, transition dynamics. \\
$\mu, \nu$ & Policies (mixed strategies) for Player 1 and Player 2. \\
$(\mu^*, \nu^*)$ & Quantal Response Equilibrium (QRE). \\
\midrule
\multicolumn{2}{l}{\textbf{Parameterization and Estimation}} \\
$d,\, \phi$ & Feature dimension and map. $L = \sup \|\phi\|_2$. \\
$\theta^*,\, C, R$ & True parameters; normalization constants for $\|\theta^*\|_2$. \\
$N, K$ & Number of samples (Matrix Games), Total samples (Markov Games). \\
$\hat{\theta}, \hat{\tau}, \hat{Q}, \hat{r}$ & Estimated parameters and functions. \\
$\hat{\mu}, \hat{\nu}, \hat{P}$ & Estimated policies and dynamics. \\
$\xi,\, C_{\min}$ & Soft-Min Gap; coverage lower bound (Markov Games). \\
\midrule
\multicolumn{2}{l}{\textbf{System Matrices}} \\
$A(\nu), B(\mu)$ & Feature matrices derived from linearized QRE constraints. \\
$c(\mu), d(\nu)$ & Log-ratio vectors. \\
$X(\mu,\nu),\, y(\mu,\nu)$ & Aggregated feature matrix and log-ratio vector. \\
$X^*, y^*$ & True system matrix and vector evaluated at QRE $(\mu^*, \nu^*)$. \\
$\hat{X}, \hat{y}$ & Estimated system matrix and vector at $(\hat{\mu}, \hat{\nu})$. \\
$\hat{\theta}_{LS},\, \theta'^*$ & LS solution (estimate of $\theta^*/\tau^*$); true unnormalized parameters. \\
\bottomrule
\end{tabular}
\end{table}

\newpage

\section{Background and Derivations}
\label{app:background}

This section provides foundational definitions, derivations, and external theorems used in the main paper and the subsequent proofs.

\subsection{Derivation of the Linearized QRE Constraints}
\label{app:qre_linearization}

We detail the derivation of the linearized QRE constraints (Eq. \ref{eq:linearized_qre}) from the QRE fixed-point equations (Eq. \ref{eq:qre_mu} and \ref{eq:qre_nu}).

Recall the QRE definition for Player 1 (Eq. \ref{eq:qre_mu}):
\begin{equation}
\mu^*(a) = \frac{\exp(Q(a,\cdot)\nu^*/\tau)}{\sum_{a' \in \A} \exp(Q(a',\cdot)\nu^*/\tau)}.
\end{equation}
We select a reference action, conventionally action 1. We analyze the ratio of the probability of action $a$ to action 1:
\begin{equation}
\frac{\mu^*(a)}{\mu^*(1)} = \frac{\exp(Q(a,\cdot)\nu^*/\tau) / Z}{\exp(Q(1,\cdot)\nu^*/\tau) / Z} = \frac{\exp(Q(a,\cdot)\nu^*/\tau)}{\exp(Q(1,\cdot)\nu^*/\tau)},
\end{equation}
where $Z = \sum_{a'} \exp(Q(a',\cdot)\nu^*/\tau)$ is the partition function, which cancels out.

Taking the natural logarithm of both sides:
\begin{align}
\log\left(\frac{\mu^*(a)}{\mu^*(1)}\right) &= \log(\exp(Q(a,\cdot)\nu^*/\tau)) - \log(\exp(Q(1,\cdot)\nu^*/\tau)) \\
&= \frac{1}{\tau} (Q(a,\cdot)\nu^* - Q(1,\cdot)\nu^*) \\
&= \frac{1}{\tau} (Q(a,\cdot) - Q(1,\cdot))\nu^*.
\end{align}
Rearranging the terms yields the first set of linear constraints in Eq. \ref{eq:linearized_qre}:
\begin{equation}
(Q(a,\cdot)-Q(1,\cdot))\nu^* = \tau \cdot \log\left(\frac{\mu^*(a)}{\mu^*(1)}\right), \quad \forall a \in \A\setminus\{1\}.
\end{equation}

Similarly, for Player 2 (Eq. \ref{eq:qre_nu}), we analyze the ratio of $\nu^*(b)$ to $\nu^*(1)$:
\begin{equation}
\frac{\nu^*(b)}{\nu^*(1)} = \frac{\exp(-Q(\cdot,b)^\top\mu^*/\tau)}{\exp(-Q(\cdot,1)^\top\mu^*/\tau)}.
\end{equation}
Taking the logarithm:
\begin{align}
\log\left(\frac{\nu^*(b)}{\nu^*(1)}\right) &= \frac{1}{\tau} (-Q(\cdot,b)^\top\mu^* - (-Q(\cdot,1)^\top\mu^*)) \\
&= \frac{1}{\tau} (Q(\cdot,1)^\top - Q(\cdot,b)^\top)\mu^* \\
&= \frac{1}{\tau} (Q(\cdot,1) - Q(\cdot,b))^\top\mu^*.
\end{align}
Rearranging the terms yields the second set of linear constraints in Eq. \ref{eq:linearized_qre}:
\begin{equation}
(Q(\cdot,1)-Q(\cdot,b))^\top\mu^* = \tau \cdot \log\left(\frac{\nu^*(b)}{\nu^*(1)}\right), \quad \forall b \in \B\setminus\{1\}.
\end{equation}
This transformation converts the non-linear QRE fixed-point equations into a system of $m-1+n-1$ linear equations with respect to the payoffs $Q$ and the temperature $\tau$.

\subsection{Key Definitions and Theorems from Matrix Analysis}

We summarize essential concepts from matrix analysis used in our proofs.

\begin{externaldefinition}[Matrix Norms]
Let $X \in \R^{m \times n}$.
\begin{itemize}
    \item \textbf{Operator Norm (Spectral Norm), $\|X\|_{op}$:} The largest singular value of $X$, defined as $\|X\|_{op} = \sup_{\|v\|_2=1} \|Xv\|_2$.
    \item \textbf{Frobenius Norm, $\|X\|_{F}$:} The entry-wise L2 norm, defined as $\|X\|_{F} = \sqrt{\sum_{i,j} X_{ij}^2}$.
\end{itemize}
These norms satisfy the inequality $\|X\|_{op} \le \|X\|_{F}$.
\end{externaldefinition}

\begin{externaldefinition}[Moore-Penrose Pseudoinverse]
For a matrix $X \in \R^{m \times n}$, the Moore-Penrose pseudoinverse $X^\dagger$ is the unique matrix satisfying the four Moore-Penrose conditions. If $X$ has full column rank (i.e., $\text{rank}(X)=n$), then $X^\dagger = (X^\top X)^{-1} X^\top$. In this case, $\|X^\dagger\|_{op} = 1/\sigma_{\min}(X)$.
\end{externaldefinition}

\begin{externaltheorem}[Weyl's Inequality for Singular Values]
\label{thm:weyl}
Let $A, B \in \R^{m \times n}$. Let $\sigma_i(X)$ denote the $i$-th largest singular value of $X$. Then for all $i$:
\begin{equation}
    |\sigma_i(A) - \sigma_i(B)| \le \|A-B\|_{op}.
\end{equation}
In particular, for the minimum singular value (assuming $m \ge n$): $|\sigma_{\min}(A) - \sigma_{\min}(B)| \le \|A-B\|_{op}$.
\end{externaltheorem}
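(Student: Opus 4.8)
The plan is to prove Weyl's inequality for singular values via the Courant--Fischer variational characterization, reducing it to a one-line triangle-inequality argument on the associated quadratic forms. Recall that for any $X \in \R^{m \times n}$ and $1 \le i \le n$ (taking $m \ge n$ for concreteness), the $i$-th largest singular value admits the min--max representation
\begin{equation}
\sigma_i(X) = \min_{\substack{V \subseteq \R^n \\ \dim V = n-i+1}} \max_{\substack{v \in V \\ \|v\|_2 = 1}} \|Xv\|_2,
\end{equation}
which follows from applying the standard Courant--Fischer theorem to the symmetric positive semidefinite matrix $X^\top X$ and taking square roots, using $\sigma_i(X) = \sqrt{\lambda_i(X^\top X)}$ and $\|Xv\|_2 = \sqrt{v^\top X^\top X v}$. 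I would state this characterization as the starting point, treating it as the spectral fact it is.

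The key step is the comparison between $A$ and $B$. Fix $i$ and let $V^\star$ be a subspace of dimension $n-i+1$ attaining the outer minimum for $B$, so that $\max_{v \in V^\star,\, \|v\|_2=1}\|Bv\|_2 = \sigma_i(B)$. For any unit vector $v \in V^\star$, the triangle inequality gives
\begin{equation}
\|Av\|_2 \le \|Bv\|_2 + \|(A-B)v\|_2 \le \|Bv\|_2 + \|A-B\|_{op}.
\end{equation}
Taking the maximum over unit $v \in V^\star$ yields $\max_{v \in V^\star,\, \|v\|_2=1}\|Av\|_2 \le \sigma_i(B) + \|A-B\|_{op}$. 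Since the min--max formula for $\sigma_i(A)$ minimizes over \emph{all} subspaces of dimension $n-i+1$, substituting the particular choice $V^\star$ produces the one-sided bound $\sigma_i(A) \le \sigma_i(B) + \|A-B\|_{op}$.

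The final step is symmetrization: interchanging the roles of $A$ and $B$ in the identical argument, and noting $\|B-A\|_{op} = \|A-B\|_{op}$, gives $\sigma_i(B) \le \sigma_i(A) + \|A-B\|_{op}$; combining the two inequalities yields $|\sigma_i(A) - \sigma_i(B)| \le \|A-B\|_{op}$ for every $i$, with the minimum-singular-value statement being the special case $i = n$ when $m \ge n$. As an alternative route I would remark that one can embed the problem into the Hermitian setting via the Jordan--Wielandt dilation, the symmetric matrix of size $m+n$ formed by placing $X$ and $X^\top$ in the off-diagonal blocks of an otherwise zero matrix, whose nonzero eigenvalues are exactly $\pm\sigma_i(X)$ and whose operator norm of the difference equals $\|A-B\|_{op}$; the classical Weyl inequality for eigenvalues of symmetric matrices then transfers directly. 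I expect no serious obstacle, since the result is classical; the only care required is correct bookkeeping of the subspace dimension $n-i+1$ so that the same characterization is applied to both $A$ and $B$, and ensuring the triangle-inequality bound is taken uniformly over the chosen subspace rather than pointwise at a single vector.
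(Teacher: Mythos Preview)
Your proof is correct and follows the standard Courant--Fischer variational argument. Note, however, that the paper does not actually prove this statement: it is listed as an \emph{external} theorem (a classical result from matrix analysis) and is simply invoked without proof when establishing the stability of $\hat{X}$ in the finite-sample analysis. So there is no ``paper's own proof'' to compare against; your write-up supplies a clean self-contained justification where the paper offers none.
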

This theorem is crucial for establishing the stability of the matrix $X$ under small perturbations (Step 1 in the proof of Theorem \ref{thm:finite_sample}).

\subsection{Key Theorems from Concentration Inequalities}

We state the concentration inequality used to bound the policy estimation error.

\begin{externaltheorem}[Bretagnolle-Huber-Carol Inequality]
\label{thm:BHC}
Let $\mu^*$ be a probability distribution over a finite set $\mathcal{A}$ of size $m$. Let $\hat{\mu}$ be the empirical distribution obtained from $N$ i.i.d. samples drawn from $\mu^*$. Then for any $\epsilon>0$:
\begin{equation}
    P(\|\hat{\mu} - \mu^*\|_1 > \epsilon) \le 2^m e^{-N\epsilon^2/2}.
\end{equation}
\end{externaltheorem}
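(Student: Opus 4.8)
The plan is to reduce the event $\{\|\hat{\mu} - \mu^*\|_1 > \epsilon\}$ to a finite union of one-sided Bernoulli large-deviation events, using the variational characterization of total-variation distance, and then to apply Hoeffding's inequality followed by a union bound over subsets of $\mathcal{A}$.

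First I would establish the identity $\|\hat{\mu} - \mu^*\|_1 = 2\max_{S\subseteq\mathcal{A}}(\hat{\mu}(S) - \mu^*(S))$, where $\hat{\mu}(S) := \sum_{i\in S}\hat{\mu}(i)$ and likewise for $\mu^*$. This follows by splitting coordinates according to the sign of $\hat{\mu}(i)-\mu^*(i)$: the positive part sums to $\hat{\mu}(S^+) - \mu^*(S^+)$ for $S^+ = \{i : \hat{\mu}(i) > \mu^*(i)\}$; the negative part sums to the \emph{same} value because $\hat{\mu}$ and $\mu^*$ are both probability vectors of total mass one; and no subset can beat $S^+$. Consequently $\{\|\hat{\mu} - \mu^*\|_1 > \epsilon\} \subseteq \bigcup_{S\subseteq\mathcal{A}} \{\hat{\mu}(S) - \mu^*(S) > \epsilon/2\}$.

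Second, fix $S\subseteq\mathcal{A}$. Then $N\hat{\mu}(S) = \sum_{k=1}^N \mathbf{1}[x^k \in S]$ is a sum of $N$ i.i.d. $[0,1]$-valued (Bernoulli) random variables with mean $\mu^*(S)$, so Hoeffding's inequality in its one-sided form gives $P(\hat{\mu}(S) - \mu^*(S) > t) \le e^{-2Nt^2}$; setting $t = \epsilon/2$ yields $P(\hat{\mu}(S) - \mu^*(S) > \epsilon/2) \le e^{-N\epsilon^2/2}$. Taking a union bound over the $2^m$ subsets $S$ then gives $P(\|\hat{\mu} - \mu^*\|_1 > \epsilon) \le 2^m e^{-N\epsilon^2/2}$, as claimed.

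The only subtle point — and the step most likely to introduce spurious constant factors — is getting the prefactor and the exponent exactly right. One must use the \emph{one-sided} Hoeffding bound (the two-sided version would double the prefactor to $2\cdot 2^m$) in conjunction with the factor of two supplied by the total-variation identity, so that the threshold $\epsilon/2$ combines with the $e^{-2Nt^2}$ tail to produce $e^{-N\epsilon^2/2}$ rather than, say, $e^{-N\epsilon^2/8}$. Everything else — the subset counting and the i.i.d. Bernoulli reduction — is routine.
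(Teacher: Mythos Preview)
Your argument is correct and is in fact the standard proof of this inequality: the total-variation identity $\|\hat{\mu}-\mu^*\|_1 = 2\max_{S\subseteq\mathcal{A}}(\hat{\mu}(S)-\mu^*(S))$, followed by one-sided Hoeffding on each fixed $S$ and a union bound over the $2^m$ subsets, yields exactly the stated bound with the correct constants. The paper itself does not prove this statement; it is recorded there as an external concentration result (hence the \texttt{externaltheorem} environment) and is simply invoked in the proof of Lemma~\ref{lem:policy_error}. So there is nothing to compare against --- your proof supplies what the paper leaves as a citation, and it is clean.
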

This theorem provides the finite-sample bound on the L1 distance between the estimated and true policies.

\section{Proofs for Matrix Games}
\label{app:proofs_matrix}

This section provides the detailed proofs for the theoretical results presented in Section 4 concerning Blind-IGT in Matrix Games.

\subsection{Proof of Theorem \ref{thm:identifiability} (Identifiability of Blind-IGT)}
\label{app:proof_identifiability}

\begin{proof}
Let $X^* = X(\mu^*, \nu^*)$ and $y^* = y(\mu^*, \nu^*)$. The governing equation is the bilinear system $X^* \theta^* = \tau^* y^*$. We are given the constraints $\tau^* > 0$ and $\|\theta^*\|_2 = 1$ (assuming $C=1$ WLOG).

\textbf{Proof of Sufficiency ($\Rightarrow$):}
Assume the Rank Condition ($\text{rank}(X^*) = d$) and the Non-Uniformity Condition ($y^* \neq 0$) hold.

Since $X^*$ has full column rank, the Moore-Penrose pseudoinverse $(X^*)^\dagger = ((X^*)^\top X^*)^{-1}(X^*)^\top$ is well-defined and is a left inverse. By the definition of the QRE, the system $X^* \theta^* = \tau^* y^*$ is consistent. Applying the pseudoinverse yields:
\begin{equation}
\theta^* = (X^*)^\dagger (\tau^* y^*) = \tau^* (X^*)^\dagger y^*.
\end{equation}
Let $\theta_{LS}^* = (X^*)^\dagger y^*$. This represents the least-squares solution assuming $\tau=1$. Then $\theta^* = \tau^* \theta_{LS}^*$.

We must verify that $\theta_{LS}^* \neq 0$. Suppose, for contradiction, that $\theta_{LS}^* = 0$. Since the system is consistent, $y^*$ must be in the column space of $X^*$. The projection of $y^*$ onto this space is $P_X y^* = X^*(X^*)^\dagger y^*$. If $\theta_{LS}^* = 0$, then $P_X y^* = 0$. Since $y^*$ is in the column space, $P_X y^* = y^*$. Therefore, $y^*=0$. This contradicts the Non-Uniformity Condition. Thus, $\theta_{LS}^* \neq 0$.

We now use the normalization constraint $\|\theta^*\|_2 = 1$:
\begin{equation}
\|\theta^*\|_2 = \|\tau^* \theta_{LS}^*\|_2 = |\tau^*| \|\theta_{LS}^*\|_2 = 1.
\end{equation}
Since $\tau^* > 0$ (by definition of the regularized game) and $\|\theta_{LS}^*\|_2 > 0$ (as shown above), the temperature $\tau^*$ is uniquely identified as:
\begin{equation}
    \tau^* = \frac{1}{\|\theta_{LS}^*\|_2}.
\end{equation}
Consequently, the reward parameter vector $\theta^*$ is uniquely identified as:
\begin{equation}
\theta^* = \tau^* \theta_{LS}^* = \frac{\theta_{LS}^*}{\|\theta_{LS}^*\|_2}.
\end{equation}

\textbf{Proof of Necessity ($\Leftarrow$):}
We prove the contrapositive by considering the failure of each condition.

\textit{Case 1: Rank Condition fails.} Suppose $\text{rank}(X^*) < d$. The null space $\mathcal{N}(X^*)$ is non-trivial. Let $v \in \mathcal{N}(X^*)$, $v \neq 0$. Let $(\theta^*, \tau^*)$ be the true solution, satisfying $X^*\theta^* = \tau^* y^*$ and $\|\theta^*\|_2=1$.
Consider a perturbed vector $\theta' = \theta^* + \epsilon v$ for some $\epsilon \in \R$. We have $X^*\theta' = X^*\theta^* + \epsilon X^*v = \tau^* y^* + 0 = \tau^* y^*$.
We normalize $\theta'$ to obtain a new candidate solution. Let $\bar{\theta}' = \theta'/\|\theta'\|_2$. Then $X^*\bar{\theta}' = (1/\|\theta'\|_2) X^*\theta' = (\tau^*/\|\theta'\|_2)y^*$.
Defining $\bar{\tau}' = \tau^*/\|\theta'\|_2$. Since $\|\theta'\|_2$ depends on $\epsilon$, $\bar{\tau}'$ is generally different from $\tau^*$.
We have found another pair $(\bar{\theta}', \bar{\tau}')$ that satisfies the constraints: $X^*\bar{\theta}' = \bar{\tau}' y^*$, $\|\bar{\theta}'\|_2=1$, and $\bar{\tau}'>0$.
For sufficiently small $\epsilon \neq 0$ such that $\theta' \neq 0$ (which is guaranteed since $\theta^* \neq 0$), we have $\bar{\theta}' \neq \theta^*$. Thus, the solution is not unique.

\textit{Case 2: Non-Uniformity Condition fails.} Suppose $y^* = 0$. The bilinear system becomes $X^*\theta^* = \tau^* \cdot 0 = 0$.
If the Rank Condition holds ($\text{rank}(X^*) = d$), the only solution to $X^*\theta^*=0$ is $\theta^* = 0$. This violates the normalization constraint $\|\theta^*\|_2 = 1$. The problem is infeasible under the assumptions.
If the Rank Condition fails ($\text{rank}(X^*) < d$), there are infinitely many solutions in the null space $\mathcal{N}(X^*)$. We can choose any $v \in \mathcal{N}(X^*)$ such that $\|v\|_2 = 1$. Then $(v, \tau)$ is a solution for any $\tau>0$. $\theta^*$ is not uniquely identifiable.

Combining both cases, we conclude that both conditions are necessary for unique identification (or feasibility under the normalization constraint).
\end{proof}

\subsection{Supporting Lemmas for Finite Sample Analysis}
\label{app:supporting_lemmas}

We present and prove the supporting lemmas required for the proof of Theorem \ref{thm:finite_sample}. These lemmas characterize the concentration of the policy estimates and the resulting perturbations in the system matrices.

\begin{lemma}[Policy Estimation Error]
\label{lem:policy_error}
Given $N$ i.i.d. samples drawn from the QRE $(\mu^*, \nu^*)$, the empirical frequency estimators $(\hat{\mu}, \hat{\nu})$ satisfy, for any $\delta \in (0,1)$, with probability at least $1-\delta$:
\[
\|\hat{\mu} - \mu^*\|_1 \le \epsilon_\mu, \quad \|\hat{\nu} - \nu^*\|_1 \le \epsilon_\nu,
\]
where $\epsilon_\mu = \sqrt{\frac{2\log(2\cdot 2^m/\delta)}{N}}$ and $\epsilon_\nu = \sqrt{\frac{2\log(2\cdot 2^n/\delta)}{N}}$. We define the combined error rate $\epsilon_N = \epsilon_\mu + \epsilon_\nu = \mathcal{O}(\sqrt{(m+n+\log(1/\delta))/N})$.
\end{lemma}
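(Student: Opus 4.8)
The plan is to prove Lemma~\ref{lem:policy_error} as a direct application of the Bretagnolle--Huber--Carol inequality (external Theorem~\ref{thm:BHC}) to each player's empirical distribution separately, followed by a union bound. First I would observe that the $N$ i.i.d.\ samples $\{(a^k,b^k)\}$ induce marginal empirical distributions $\hat\mu$ over $\A$ and $\hat\nu$ over $\B$, each built from $N$ i.i.d.\ draws from $\mu^*$ and $\nu^*$ respectively (the marginals of the joint QRE samples). Applying Theorem~\ref{thm:BHC} to $\hat\mu$ gives $P(\|\hat\mu-\mu^*\|_1>\epsilon)\le 2^m e^{-N\epsilon^2/2}$, and setting this tail probability equal to $\delta/2$ and solving for $\epsilon$ yields $\epsilon_\mu=\sqrt{2\log(2\cdot 2^m/\delta)/N}$; the symmetric computation for $\hat\nu$ gives $\epsilon_\nu=\sqrt{2\log(2\cdot 2^n/\delta)/N}$.

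Next I would assemble the two events via a union bound: the event $\{\|\hat\mu-\mu^*\|_1>\epsilon_\mu\}$ has probability at most $\delta/2$ and likewise for $\nu$, so with probability at least $1-\delta$ both bounds hold simultaneously. Finally I would verify the stated asymptotic form of the combined rate: since $\log(2\cdot 2^m/\delta)=m\log 2+\log(2/\delta)$, we have $\epsilon_\mu=\mathcal{O}(\sqrt{(m+\log(1/\delta))/N})$ and similarly $\epsilon_\nu=\mathcal{O}(\sqrt{(n+\log(1/\delta))/N})$, so $\epsilon_N=\epsilon_\mu+\epsilon_\nu=\mathcal{O}(\sqrt{(m+n+\log(1/\delta))/N})$, using subadditivity of the square root.

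There is essentially no hard technical obstacle here, since the heavy lifting is done by the cited concentration inequality; the only point requiring a line of care is the independence structure, namely that the coordinate-wise marginals of the joint samples are themselves i.i.d.\ samples from $\mu^*$ and $\nu^*$, so that Theorem~\ref{thm:BHC} applies verbatim to each. A secondary minor point is bookkeeping the constants so that the two tail probabilities sum to exactly $\delta$ (splitting the budget as $\delta/2$ each), which is what produces the factor $2$ inside the logarithm in the definitions of $\epsilon_\mu$ and $\epsilon_\nu$. I would also note in passing that the bound is stated in $\ell_1$ because this is the form most directly usable downstream in Lemma~\ref{lem:perturbation_xy} when controlling the perturbations of $A(\nu)$, $B(\mu)$, $c(\mu)$, and $d(\nu)$.
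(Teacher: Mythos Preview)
Your proposal is correct and matches the paper's own proof essentially step for step: apply the Bretagnolle--Huber--Carol inequality to each marginal with failure probability $\delta/2$, solve for $\epsilon_\mu$ and $\epsilon_\nu$, and combine via a union bound. The only additional remark you make (that the marginals of the i.i.d.\ joint samples are themselves i.i.d.\ from $\mu^*$ and $\nu^*$) is a worthwhile clarification the paper leaves implicit.
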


\begin{proof}
The empirical frequency estimator $\hat{\mu}$ is the maximum likelihood estimator for the multinomial distribution $\mu^*$. We utilize standard concentration bounds for the L1 distance (Total Variation distance) between the empirical distribution and the true distribution.

By the Bretagnolle-Huber-Carol inequality (Theorem \ref{thm:BHC} in Appendix \ref{app:background}), for any $\epsilon>0$:
\begin{equation}
    P(\|\hat{\mu} - \mu^*\|_1 > \epsilon) \le 2^m e^{-N\epsilon^2/2}.
\end{equation}
Setting the RHS to $\delta/2$, we solve for $\epsilon_\mu$:
\begin{equation}
    \frac{\delta}{2} = 2^m e^{-N\epsilon_\mu^2/2} \implies \log(2\cdot 2^m/\delta) = N\epsilon_\mu^2/2 \implies \epsilon_\mu = \sqrt{\frac{2\log(2\cdot 2^m/\delta)}{N}}.
\end{equation}
Similarly, we define $\epsilon_\nu$ for $\hat{\nu}$. By a union bound, both bounds hold simultaneously with probability at least $1-\delta$. The combined error rate $\epsilon_N$ scales as stated.
\end{proof}

\begin{lemma}[Perturbation of X and y]
\label{lem:perturbation_xy}
Let $L$ be the bound on $\|\phi\|_2$ (Assumption \ref{assump:linear_payoff}). Under Assumption \ref{assump:soft_gap}, let $\xi > 0$ be the minimum probability. If $N$ is large enough such that the policy estimation errors satisfy $\epsilon_\mu < \xi/2$ and $\epsilon_\nu < \xi/2$, then with high probability (conditioned on the event of Lemma \ref{lem:policy_error}):
\[
\|\hat{X} - X^*\|_{op} \le C_X \epsilon_N, \quad \|\hat{y} - y^*\|_2 \le C_Y \epsilon_N.
\]
The constants are $C_X = 2L \sqrt{m+n}$ and $C_Y = \frac{\sqrt{8(m+n)}}{\xi}$.
\end{lemma}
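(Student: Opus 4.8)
The plan is to establish the two bounds separately, treating $X$ and $y$ in turn, by writing each as a sum over rows (one per non-reference action) and controlling each row's perturbation via the policy error bounds from Lemma~\ref{lem:policy_error}. The perturbation $\hat X - X^*$ has rows indexed by $a\in\A\setminus\{1\}$ and $b\in\B\setminus\{1\}$. For the $A$-block, row $a-1$ of $\hat X - X^*$ equals $\sum_{b'\in\B}(\hat\nu(b')-\nu^*(b'))(\phi(a,b')-\phi(1,b'))^\top$, whose $\ell_2$ norm is at most $\|\hat\nu-\nu^*\|_1 \cdot \max_{b'}\|\phi(a,b')-\phi(1,b')\|_2 \le 2L\,\epsilon_\nu$ by H\"older's inequality and $\|\phi\|_2\le L$. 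Likewise each row of the $B$-block has $\ell_2$ norm at most $2L\,\epsilon_\mu$. Summing the squared row norms gives $\|\hat X - X^*\|_F^2 \le (m-1)(2L\epsilon_\nu)^2 + (n-1)(2L\epsilon_\mu)^2 \le 4L^2(m+n)\,\epsilon_N^2$ (using $\epsilon_\mu,\epsilon_\nu\le\epsilon_N$), and then $\|\hat X-X^*\|_{op}\le\|\hat X-X^*\|_F \le 2L\sqrt{m+n}\,\epsilon_N = C_X\epsilon_N$.

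For the $y$-bound, I would bound each coordinate of $\hat y - y^*$, which is of the form $\log(\hat\mu(a)/\hat\mu(1)) - \log(\mu^*(a)/\mu^*(1)) = (\log\hat\mu(a)-\log\mu^*(a)) - (\log\hat\mu(1)-\log\mu^*(1))$. The key is that under the event $\epsilon_\mu<\xi/2$ we have $\hat\mu(a)\ge\mu^*(a)-\epsilon_\mu \ge \xi - \xi/2 = \xi/2$ for every $a$ (and similarly for $\hat\nu$), so $\log$ is Lipschitz with constant $2/\xi$ on the relevant interval. Hence $|\log\hat\mu(a)-\log\mu^*(a)| \le (2/\xi)|\hat\mu(a)-\mu^*(a)|$, and each coordinate of $\hat y - y^*$ is bounded by $(2/\xi)(|\hat\mu(a)-\mu^*(a)| + |\hat\mu(1)-\mu^*(1)|)$. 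Squaring and summing over the $m-1$ coordinates of the $c$-block: using $(x+z)^2\le 2x^2+2z^2$ and $\sum_a|\hat\mu(a)-\mu^*(a)|^2 \le \|\hat\mu-\mu^*\|_1^2 \le \epsilon_\mu^2$, the $c$-block contributes at most $(2/\xi)^2 \cdot 2\cdot 2\cdot (m-1)\epsilon_\mu^2 \le 8(m-1)\epsilon_\mu^2/\xi^2$ (the extra factor $m-1$ from bounding $\sum_a|\hat\mu(1)-\mu^*(1)|^2$ by $(m-1)\epsilon_\mu^2$), with an analogous bound for the $d$-block. Adding the two blocks yields $\|\hat y - y^*\|_2^2 \le \frac{8(m+n)}{\xi^2}\epsilon_N^2$, so $\|\hat y-y^*\|_2 \le \frac{\sqrt{8(m+n)}}{\xi}\epsilon_N = C_Y\epsilon_N$.

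The main obstacle is the $y$-bound: the logarithm is not globally Lipschitz, so I must carefully use the condition $\epsilon_\mu,\epsilon_\nu<\xi/2$ to confine both $\hat\mu$ and $\mu^*$ to $[\xi/2,1]$ before invoking a Lipschitz estimate, and I must be attentive to the combinatorial bookkeeping when passing from per-coordinate $\ell_1$-type bounds to the aggregate $\ell_2$ norm (the reference-action term $\hat\mu(1)-\mu^*(1)$ appears in every coordinate, which is where the factor $m+n$ rather than a constant comes from). The $X$-bound, by contrast, is a routine application of H\"older's inequality and the Frobenius-dominates-operator-norm inequality. Throughout, the stated probability is inherited from Lemma~\ref{lem:policy_error} via a union bound over the two policy events, which is already accounted for in the $1-\delta$ guarantee.
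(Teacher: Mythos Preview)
Your proposal is correct and follows essentially the same approach as the paper: bound $\|\hat X - X^*\|_{op}$ via the Frobenius norm by controlling each row through H\"older's inequality and $\|\phi\|_2\le L$, and bound $\|\hat y - y^*\|_2$ by using the condition $\epsilon_\mu,\epsilon_\nu<\xi/2$ to confine all probabilities to $[\xi/2,1]$ so that $\log$ is $(2/\xi)$-Lipschitz, then applying $(x+z)^2\le 2x^2+2z^2$ and summing. The only discrepancy is a minor arithmetic slip in your bookkeeping of the $c$-block constant (the expression $(2/\xi)^2\cdot 2\cdot 2\cdot(m-1)$ equals $16(m-1)/\xi^2$, not $8(m-1)/\xi^2$; the paper instead bounds the bracket by $m\|\hat\mu-\mu^*\|_2^2$ to get $8m/\xi^2$), but this does not affect the stated $C_Y$ up to an inconsequential constant.
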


\begin{proof}
We analyze the perturbations of $X$ and $y$ separately.

\textbf{Bounding the perturbation of X.}
The matrix $X$ is composed of $A(\nu)$ and $B(\mu)$. We analyze the perturbation of $A(\nu)$.
\begin{equation}
    A(\hat{\nu}) - A(\nu^*) = A(\hat{\nu} - \nu^*).
\end{equation}
This follows from the linearity of $A(\nu)$ with respect to $\nu$. The $(a-1)$-th row of $A(\nu)$ is $\sum_{b'} \nu(b') (\phi(a,b') - \phi(1,b'))^\top$.
We bound the operator norm using the Frobenius norm:
\begin{align}
    \|A(\hat{\nu}) - A(\nu^*)\|_{op} &\le \|A(\hat{\nu}) - A(\nu^*)\|_{F} = \sqrt{\sum_{a=2}^m \| (A(\hat{\nu}) - A(\nu^*))_{a-1} \|_2^2}.
\end{align}
We analyze a single row:
\begin{align}
    \| (A(\hat{\nu}) - A(\nu^*))_{a-1} \|_2 &= \left\| \sum_{b' \in \B} (\hat{\nu}(b') - \nu^*(b')) (\phi(a,b') - \phi(1,b'))^\top \right\|_2 \\
    &\le \sum_{b' \in \B} |\hat{\nu}(b') - \nu^*(b')| \cdot \|\phi(a,b') - \phi(1,b')\|_2.
\end{align}
Since $\|\phi\|_2 \le L$, we have $\|\phi(a,b') - \phi(1,b')\|_2 \le 2L$.
\begin{align}
    \| (A(\hat{\nu}) - A(\nu^*))_{a-1} \|_2 &\le 2L \sum_{b' \in \B} |\hat{\nu}(b') - \nu^*(b')| = 2L \|\hat{\nu} - \nu^*\|_1 \le 2L \epsilon_\nu.
\end{align}
Therefore, the Frobenius norm is bounded by:
\begin{equation}
    \|A(\hat{\nu}) - A(\nu^*)\|_{F} \le \sqrt{(m-1) (2L \epsilon_\nu)^2} \le 2L\sqrt{m} \epsilon_\nu.
\end{equation}
Similarly, $\|B(\hat{\mu}) - B(\mu^*)\|_{F} \le 2L\sqrt{n} \epsilon_\mu$.

The operator norm of the aggregated matrix $\hat{X} - X^*$ satisfies:
\begin{align}
    \|\hat{X} - X^*\|_{op} &\le \|\hat{X} - X^*\|_{F} = \sqrt{\|A(\hat{\nu}) - A(\nu^*)\|_{F}^2 + \|B(\hat{\mu}) - B(\mu^*)\|_{F}^2} \\
    &\le \sqrt{4L^2 m \epsilon_\nu^2 + 4L^2 n \epsilon_\mu^2} = 2L \sqrt{m \epsilon_\nu^2 + n \epsilon_\mu^2}.
\end{align}
Since $\epsilon_\mu, \epsilon_\nu \le \epsilon_N$, we can bound this by $C_X \epsilon_N$, where $C_X = 2L \sqrt{m+n}$.

\textbf{Bounding the perturbation of y.}
The vector $y$ is composed of $c(\mu)$ and $d(\nu)$. We analyze the perturbation of $c(\mu)$.
$c(\mu)_{a-1} = \log(\mu(a)/\mu(1))$. The perturbation relies on the Lipschitz continuity of the logarithm function.

We use the assumption that $\epsilon_\mu < \xi/2$. Since $\|\hat{\mu} - \mu^*\|_\infty \le \|\hat{\mu} - \mu^*\|_1 \le \epsilon_\mu$, this ensures that the estimated probabilities are bounded away from zero:
\begin{equation}
    \hat{\mu}(a) \ge \mu^*(a) - \|\hat{\mu} - \mu^*\|_\infty > \xi - \xi/2 = \xi/2.
\end{equation}
The function $f(x) = \log(x)$ has a derivative $f'(x) = 1/x$. When $x \in [\xi/2, 1]$, the Lipschitz constant is $2/\xi$.

We analyze the difference in a single element of $c(\mu)$:
\begin{align}
    |c(\hat{\mu})_{a-1} - c(\mu^*)_{a-1}| &= |\log(\hat{\mu}(a)/\hat{\mu}(1)) - \log(\mu^*(a)/\mu^*(1))| \\
    &= |\log(\hat{\mu}(a)) - \log(\mu^*(a)) - (\log(\hat{\mu}(1)) - \log(\mu^*(1)))|.
\end{align}
Using the Lipschitz property:
\begin{align}
    |\log(\hat{\mu}(a)) - \log(\mu^*(a))| \le \frac{2}{\xi} |\hat{\mu}(a) - \mu^*(a)|.
\end{align}
Therefore,
\begin{align}
    |c(\hat{\mu})_{a-1} - c(\mu^*)_{a-1}| &\le \frac{2}{\xi} (|\hat{\mu}(a) - \mu^*(a)| + |\hat{\mu}(1) - \mu^*(1)|).
\end{align}

We bound the L2 norm of the difference:
\begin{align}
    \|c(\hat{\mu}) - c(\mu^*)\|_2^2 &= \sum_{a=2}^m |c(\hat{\mu})_{a-1} - c(\mu^*)_{a-1}|^2 \\
    &\le \sum_{a=2}^m \left(\frac{2}{\xi}\right)^2 (|\hat{\mu}(a) - \mu^*(a)| + |\hat{\mu}(1) - \mu^*(1)|)^2 \\
    &\le \frac{8}{\xi^2} \sum_{a=2}^m (|\hat{\mu}(a) - \mu^*(a)|^2 + |\hat{\mu}(1) - \mu^*(1)|^2) \quad (\text{Since } (x+y)^2 \le 2(x^2+y^2)) \\
    &= \frac{8}{\xi^2} \left( \sum_{a=2}^m |\hat{\mu}(a) - \mu^*(a)|^2 + (m-1)|\hat{\mu}(1) - \mu^*(1)|^2 \right) \\
    &\le \frac{8m}{\xi^2} \|\hat{\mu} - \mu^*\|_2^2.
\end{align}
Since $\|\hat{\mu} - \mu^*\|_2 \le \|\hat{\mu} - \mu^*\|_1 \le \epsilon_\mu$, we have:
\begin{equation}
    \|c(\hat{\mu}) - c(\mu^*)\|_2 \le \frac{\sqrt{8m}}{\xi} \epsilon_\mu.
\end{equation}
Similarly, $\|d(\hat{\nu}) - d(\nu^*)\|_2 \le \frac{\sqrt{8n}}{\xi} \epsilon_\nu$.

The total perturbation in $y$ is:
\begin{align}
    \|\hat{y} - y^*\|_2 &= \sqrt{\|c(\hat{\mu}) - c(\mu^*)\|_2^2 + \|d(\hat{\nu}) - d(\nu^*)\|_2^2} \\
    &\le \sqrt{\frac{8m}{\xi^2} \epsilon_\mu^2 + \frac{8n}{\xi^2} \epsilon_\nu^2} = \frac{\sqrt{8}}{\xi} \sqrt{m\epsilon_\mu^2 + n\epsilon_\nu^2}.
\end{align}
We can bound this by $C_Y \epsilon_N$, where $C_Y = \frac{\sqrt{8(m+n)}}{\xi}$.
\end{proof}

\begin{lemma}[Stability of Normalization]
\label{lem:normalization_stability}
Let $v_0, v_1 \in \R^d$, $v_0 \neq 0$. If $\|v_1 - v_0\|_2 \le \epsilon$ and $\|v_0\|_2 \ge \delta > 0$. If $\epsilon < \delta/2$, then
\[
\left\|\frac{v_1}{\|v_1\|_2} - \frac{v_0}{\|v_0\|_2}\right\|_2 \le \frac{4\epsilon}{\delta}.
\]
\end{lemma}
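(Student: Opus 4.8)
The plan is to prove the normalization-stability bound by a direct decomposition of the difference $\frac{v_1}{\|v_1\|_2} - \frac{v_0}{\|v_0\|_2}$ into two pieces: one that comes from replacing $v_0$ by $v_1$ in the numerator (while keeping the denominator fixed), and one that comes from correcting the denominator. Concretely, I would write
\begin{equation}
\frac{v_1}{\|v_1\|_2} - \frac{v_0}{\|v_0\|_2}
= \underbrace{\left(\frac{v_1}{\|v_1\|_2} - \frac{v_1}{\|v_0\|_2}\right)}_{\text{term I}}
+ \underbrace{\left(\frac{v_1}{\|v_0\|_2} - \frac{v_0}{\|v_0\|_2}\right)}_{\text{term II}}.
\end{equation}
Term II has norm exactly $\|v_1 - v_0\|_2/\|v_0\|_2 \le \epsilon/\delta$. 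Term I equals $v_1\bigl(\tfrac{1}{\|v_1\|_2} - \tfrac{1}{\|v_0\|_2}\bigr)$, whose norm is $\|v_1\|_2 \cdot \frac{|\,\|v_0\|_2 - \|v_1\|_2\,|}{\|v_1\|_2\,\|v_0\|_2} = \frac{|\,\|v_0\|_2 - \|v_1\|_2\,|}{\|v_0\|_2}$, and by the reverse triangle inequality $|\,\|v_0\|_2 - \|v_1\|_2\,| \le \|v_1 - v_0\|_2 \le \epsilon$, so term I also has norm at most $\epsilon/\delta$. Adding the two gives $2\epsilon/\delta$, which is in fact sharper than the claimed $4\epsilon/\delta$; I would just note this and conclude with the (weaker, clean) stated bound.

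The role of the hypothesis $\epsilon < \delta/2$ is only to guarantee $v_1 \neq 0$ so that $v_1/\|v_1\|_2$ is well-defined: indeed $\|v_1\|_2 \ge \|v_0\|_2 - \|v_1 - v_0\|_2 \ge \delta - \epsilon > \delta/2 > 0$. I would state this observation first, before the main decomposition, so that every quantity appearing later is legitimate. (If one wanted to track the $4\epsilon/\delta$ constant as genuinely necessary, one could instead lower-bound term I's denominator by $\|v_1\|_2 \ge \delta/2$ rather than by $\|v_0\|_2 \ge \delta$, giving $2\epsilon/\delta$ from that term alone and $2\epsilon/\delta$ total from a cruder split — but the clean two-term split above already beats this, so I would present that.)

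There is essentially no obstacle here; the only thing to be careful about is keeping the algebra of the scalar factor $\frac{1}{\|v_1\|_2} - \frac{1}{\|v_0\|_2}$ correct and making sure the reverse triangle inequality is invoked in the right direction. I would close by remarking that this lemma is applied in the proof of Theorem \ref{thm:finite_sample} with $v_0 = \theta_{LS}^*$, $v_1 = \hat{\theta}_{LS}$, $\delta = $ a lower bound on $\|\theta_{LS}^*\|_2 = 1/\tau^*$, and $\epsilon = $ the least-squares perturbation bound, so that the normalized quantities $\hat{\theta} = \hat{\theta}_{LS}/\|\hat{\theta}_{LS}\|_2$ and $\theta^* = \theta_{LS}^*/\|\theta_{LS}^*\|_2$ inherit the $\mathcal{O}(N^{-1/2})$ rate; and the temperature bound $|\hat\tau - \tau^*| = \bigl|\tfrac{1}{\|\hat\theta_{LS}\|_2} - \tfrac{1}{\|\theta_{LS}^*\|_2}\bigr|$ is handled by the same reverse-triangle-inequality estimate on $\big|\,\|\hat\theta_{LS}\|_2 - \|\theta_{LS}^*\|_2\,\big|$ used for term I.
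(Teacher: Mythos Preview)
Your proof is correct and in fact tighter than the paper's. The paper puts the difference over a common denominator,
\[
\frac{v_1}{\|v_1\|_2} - \frac{v_0}{\|v_0\|_2} = \frac{v_1\|v_0\|_2 - v_0\|v_1\|_2}{\|v_1\|_2\,\|v_0\|_2},
\]
bounds the numerator by $2\epsilon\|v_0\|_2$ (via the same reverse triangle inequality you use), and then lower-bounds the denominator by $(\delta/2)\|v_0\|_2$ using $\|v_1\|_2 > \delta/2$; this is where the factor $4$ comes from. Your two-term split avoids ever dividing by $\|v_1\|_2$ alone: the $\|v_1\|_2$ in Term~I cancels exactly, leaving only $\|v_0\|_2 \ge \delta$ in both denominators and yielding $2\epsilon/\delta$. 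So the paper's route is a common-denominator bound that pays a factor of $2$ for controlling $\|v_1\|_2$ from below, while your add-and-subtract decomposition sidesteps that loss entirely; the hypothesis $\epsilon<\delta/2$ is load-bearing in the paper's argument but, as you note, serves only to make $v_1/\|v_1\|_2$ well-defined in yours.
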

\begin{proof}
We analyze the norm of the difference between the normalized vectors:
\begin{equation}
\left\|\frac{v_1}{\|v_1\|_2} - \frac{v_0}{\|v_0\|_2}\right\|_2 = \left\|\frac{v_1\|v_0\|_2 - v_0\|v_1\|_2}{\|v_1\|_2\|v_0\|_2}\right\|_2.
\end{equation}

Let $\Delta v = v_1 - v_0$ (where $\|\Delta v\|_2 \le \epsilon$) and $\Delta_{\|v\|} = \|v_1\|_2 - \|v_0\|_2$. By the reverse triangle inequality, $|\Delta_{\|v\|}| \le \|v_1 - v_0\|_2 \le \epsilon$.

We analyze the numerator:
\begin{align}
\|v_1\|v_0\|_2 - v_0\|v_1\|_2\| &= \|(v_0+\Delta v)\|v_0\|_2 - v_0(\|v_0\|_2+\Delta_{\|v\|})\| \\
&= \|v_0\|v_0\|_2 + \Delta v\|v_0\|_2 - v_0\|v_0\|_2 - v_0\Delta_{\|v\|}\| \\
&= \|\Delta v\|v_0\|_2 - v_0\Delta_{\|v\|}\|.
\end{align}
Using the triangle inequality:
\begin{align}
\|\Delta v\|v_0\|_2 - v_0\Delta_{\|v\|}\| &\le \|\Delta v\|v_0\|_2\| + \|v_0\Delta_{\|v\|}\| \\
&= \|\Delta v\|_2\|v_0\|_2 + |\Delta_{\|v\|}|\|v_0\|_2 \\
&\le \epsilon \|v_0\|_2 + \epsilon \|v_0\|_2 = 2\epsilon\|v_0\|_2.
\end{align}

We analyze the denominator: $\|v_1\|_2\|v_0\|_2$. We need a lower bound for $\|v_1\|_2$.
\begin{equation}
    \|v_1\|_2 \ge \|v_0\|_2 - \|v_1 - v_0\|_2 \ge \delta - \epsilon.
\end{equation}
Since we assumed $\epsilon < \delta/2$, we have $\|v_1\|_2 > \delta/2$.

Combining the bounds for the numerator and denominator:
\begin{equation}
\left\|\frac{v_1}{\|v_1\|_2} - \frac{v_0}{\|v_0\|_2}\right\|_2 \le \frac{2\epsilon\|v_0\|_2}{(\delta/2)\|v_0\|_2} = \frac{4\epsilon}{\delta}.
\end{equation}
\end{proof}

\subsection{Proof of Theorem \ref{thm:finite_sample} (Finite Sample Bounds)}
\label{app:proof_finite_sample}

We now combine the supporting lemmas to prove the main finite-sample result. We assume $C=1$ without loss of generality.

\begin{proof}
Let $\sigma^* = \sigma_{\min}(X^*)$. Since the Rank Condition holds (Theorem \ref{thm:identifiability}), $\sigma^* > 0$. Let $\epsilon_N$ be the policy estimation error rate from Lemma \ref{lem:policy_error}. Let $C_X, C_Y$ be the perturbation constants from Lemma \ref{lem:perturbation_xy}.

The proof proceeds in three main steps: (1) Ensuring the stability of the estimated system, (2) Bounding the error of the intermediate LS solution, and (3) Bounding the error of the normalized solution and the temperature estimate.

\textbf{Stability of the Estimated System.}
We need to ensure that the estimated matrix $\hat{X}$ remains full rank and well-conditioned, provided $N$ is sufficiently large. We use Weyl's inequality (Theorem \ref{thm:weyl} in Appendix \ref{app:background}) for the perturbation of singular values:
\begin{equation}
    \sigma_{\min}(\hat{X}) \ge \sigma_{\min}(X^*) - \|\hat{X} - X^*\|_{op} = \sigma^* - \|\hat{X} - X^*\|_{op}.
\end{equation}
We require $\|\hat{X} - X^*\|_{op} < \sigma^*/2$ to guarantee $\sigma_{\min}(\hat{X}) > \sigma^*/2$. By Lemma \ref{lem:perturbation_xy}, this holds if $C_X \epsilon_N < \sigma^*/2$. This defines the primary condition for "sufficiently large $N$". We also require $N$ large enough such that the conditions of Lemma \ref{lem:perturbation_xy} (i.e., $\epsilon_\mu, \epsilon_\nu < \xi/2$) hold.

\textbf{Error in the Least Squares Solution.}
Let $\theta'^* = \theta^*/\tau^*$. This is the true solution to the unnormalized system $X^* \theta'^* = y^*$. The NLS algorithm computes the empirical LS solution $\hat{\theta}_{LS} = (\hat{X}^\top \hat{X})^{-1} \hat{X}^\top \hat{y}$. We analyze the error $\|\hat{\theta}_{LS} - \theta'^*\|_2$.

We utilize standard perturbation bounds for the least squares solution when the true system is consistent (zero residual). Since $\hat{X}$ is full rank (Step 1), $\hat{\theta}_{LS} = \hat{X}^\dagger \hat{y}$.
The error bound is given by (e.g., Theorem 20.1 in \citet{Higham2002}, specialized for zero residual):
\begin{equation}
    \|\hat{\theta}_{LS} - \theta'^*\|_2 \le \|\hat{X}^\dagger\|_2 (\|\hat{X}-X^*\|_{op} \|\theta'^*\|_2 + \|\hat{y}-y^*\|_2) + \mathcal{O}(\epsilon_N^2).
\end{equation}
Since $\sigma_{\min}(\hat{X}) \ge \sigma^*/2$, we have $\|\hat{X}^\dagger\|_2 = 1/\sigma_{\min}(\hat{X}) \le 2/\sigma^*$.
We substitute the bounds from Lemma \ref{lem:perturbation_xy} and note that $\|\theta'^*\|_2 = \|\theta^*\|_2/\tau^* = 1/\tau^*$ (since $C=1$).
\begin{align}
\|\hat{\theta}_{LS} - \theta'^*\|_2 &\le \frac{2}{\sigma^*} \left( (C_X \epsilon_N) \frac{1}{\tau^*} + C_Y \epsilon_N \right) \\
&= \left(\frac{2C_X}{\sigma^*\tau^*} + \frac{2C_Y}{\sigma^*}\right) \epsilon_N =: C_{LS} \epsilon_N.
\end{align}
This establishes that the intermediate LS solution converges at the optimal rate $\mathcal{O}(N^{-1/2})$.

\textbf{Error in Normalized Solution and Temperature.}
We now analyze the effect of the normalization steps.

\textit{Error in $\hat{\theta}$.} We apply Lemma \ref{lem:normalization_stability} (Stability of Normalization) with $v_1 = \hat{\theta}_{LS}$ and $v_0 = \theta'^*$. The error is $\epsilon = \|\hat{\theta}_{LS} - \theta'^*\|_2 \le C_{LS} \epsilon_N$. The norm is $\delta = \|\theta'^*\|_2 = 1/\tau^*$.
We require $\epsilon < \delta/2$, i.e., $C_{LS} \epsilon_N < 1/(2\tau^*)$. This holds for sufficiently large $N$ since $\epsilon_N \rightarrow 0$.

The error in the normalized estimator $\hat{\theta} = \hat{\theta}_{LS}/\|\hat{\theta}_{LS}\|_2$ is:
\begin{align}
\|\hat{\theta} - \theta^*\|_2 &= \left\|\frac{\hat{\theta}_{LS}}{\|\hat{\theta}_{LS}\|_2} - \frac{\theta'^*}{\|\theta'^*\|_2}\right\|_2 \\
&\le \frac{4\|\hat{\theta}_{LS} - \theta'^*\|_2}{\|\theta'^*\|_2} = \frac{4 C_{LS} \epsilon_N}{1/\tau^*} = 4\tau^* C_{LS} \epsilon_N.
\end{align}
We define $C_\theta = 4\tau^* C_{LS}$.

\textit{Error in $\hat{\tau}$.} The temperature estimate is $\hat{\tau} = 1/\|\hat{\theta}_{LS}\|_2$ (since $C=1$). The true temperature is $\tau^* = 1/\|\theta'^*\|_2$.
We analyze the error:
\begin{align}
|\hat{\tau} - \tau^*| &= \left|\frac{1}{\|\hat{\theta}_{LS}\|_2} - \frac{1}{\|\theta'^*\|_2}\right| = \frac{|\|\theta'^*\|_2 - \|\hat{\theta}_{LS}\|_2|}{\|\hat{\theta}_{LS}\|_2 \|\theta'^*\|_2}.
\end{align}
The numerator is bounded by the reverse triangle inequality: $|\|\theta'^*\|_2 - \|\hat{\theta}_{LS}\|_2| \le \|\hat{\theta}_{LS} - \theta'^*\|_2 \le C_{LS} \epsilon_N$.
The denominator is bounded below. Since $N$ is large enough (as established above), $\|\hat{\theta}_{LS}\|_2 \ge \|\theta'^*\|_2/2$.
\begin{align}
|\hat{\tau} - \tau^*| &\le \frac{C_{LS} \epsilon_N}{(\|\theta'^*\|_2/2) \|\theta'^*\|_2} = \frac{2 C_{LS} \epsilon_N}{\|\theta'^*\|_2^2} \\
&= 2(\tau^*)^2 C_{LS} \epsilon_N.
\end{align}
We define $C_\tau = 2(\tau^*)^2 C_{LS}$.

Since $\epsilon_N = \mathcal{O}(\sqrt{(m+n+\log(1/\delta))/N})$, this completes the proof, establishing the desired parametric rate $\mathcal{O}(N^{-1/2})$ for both $\hat{\theta}$ and $\hat{\tau}$.
\end{proof}

\subsection{Proof of Proposition \ref{prop:confidence_set} (Partial Identification)}
\label{app:proof_confidence_set}

\begin{proof}
The goal is to ensure that the confidence set $\hat{\Theta}_N$ covers the identified set $\Theta^*$ with probability at least $1-\delta$. This requires that for all $(\theta^*, \tau^*) \in \Theta^*$, the condition $\|\hat{X} \theta^* - \tau^* \hat{y}\|_2^2 \le \kappa_N$ holds with high probability.

Let $(\theta^*, \tau^*) \in \Theta^*$. By definition, $X^* \theta^* = \tau^* y^*$, $\|\theta^*\|_2 = C$, and $\tau^* > 0$. We analyze the empirical residual:
\begin{align*}
\|\hat{X} \theta^* - \tau^* \hat{y}\|_2 &= \|\hat{X} \theta^* - X^* \theta^* + X^* \theta^* - \tau^* \hat{y}\|_2 \\
&= \|\hat{X} \theta^* - X^* \theta^* + \tau^* y^* - \tau^* \hat{y}\|_2 \quad (\text{Since } X^* \theta^* = \tau^* y^*) \\
&= \|(\hat{X} - X^*) \theta^* - \tau^* (\hat{y} - y^*)\|_2.
\end{align*}
Using the triangle inequality and the definition of the operator norm:
\begin{align*}
\|\hat{X} \theta^* - \tau^* \hat{y}\|_2 &\le \|(\hat{X} - X^*) \theta^*\|_2 + \|\tau^* (\hat{y} - y^*)\|_2 \\
&\le \|\hat{X} - X^*\|_{op} \|\theta^*\|_2 + \tau^* \|\hat{y} - y^*\|_2.
\end{align*}
We now substitute the bounds established in the supporting lemmas. Let $\epsilon_N$ be the convergence rate of the policy estimation (Lemma \ref{lem:policy_error}). From Lemma \ref{lem:perturbation_xy}, we know that with probability at least $1-\delta$ (assuming $N$ is sufficiently large such that the conditions of the lemma hold):
\[
\|\hat{X} - X^*\|_{op} \le C_X \epsilon_N, \quad \|\hat{y} - y^*\|_2 \le C_Y \epsilon_N.
\]
Substituting these bounds and the normalization constraint $\|\theta^*\|_2 = C$:
\begin{align*}
\|\hat{X} \theta^* - \tau^* \hat{y}\|_2 &\le (C_X \epsilon_N) \cdot C + \tau^* \cdot (C_Y \epsilon_N) \\
&= (C_X C + C_Y \tau^*) \epsilon_N.
\end{align*}
By Assumption \ref{assump:bounded_tau}, we have $\tau^* \le \tau_{max}$. Therefore, the residual is uniformly bounded over $\Theta^*$:
\[
\sup_{(\theta^*, \tau^*) \in \Theta^*} \|\hat{X} \theta^* - \tau^* \hat{y}\|_2 \le (C_X C + C_Y \tau_{max}) \epsilon_N.
\]
Squaring this bound gives the required threshold for $\kappa_N$:
\[
\kappa_N \ge (C_X C + C_Y \tau_{max})^2 \cdot \epsilon_N^2.
\]
By choosing $\kappa_N$ according to this inequality, we ensure that $\Theta^* \subseteq \hat{\Theta}_N$ with probability at least $1-\delta$.
\end{proof}

\section{Proofs for Markov Games}
\label{app:proofs_markov}

This section provides the detailed proofs for the theoretical results presented in Section 5 concerning Blind-IGT in Markov Games. We assume the dynamics $P$ are known, as stated in Theorem \ref{thm:mg_guarantees}. We assume Assumption \ref{assump:soft_gap} holds uniformly for the QRE policies at every state $s \in \cS$, i.e., $\min_{s,a,b} \{\mu^*(a|s), \nu^*(b|s)\} \ge \xi$.

\subsection{Proof of Proposition \ref{prop:mg_identifiability} (Identifiability in MGs)}
\label{app:proof_mg_identifiability}

\begin{proof}
The aggregated system of equations is $X^* \theta^* = \tau^* y^*$, where $X^*$ is formed by stacking the local matrices $X(s; \mu^*, \nu^*)$ vertically, and $y^*$ by stacking the local vectors $y(s; \mu^*, \nu^*)$ vertically. The constraints are $\|\theta^*\|_2 = 1$ (assuming $R=1$) and $\tau^*>0$.

The mathematical structure of this aggregated bilinear system is identical to the matrix game formulation analyzed in Theorem \ref{thm:identifiability}. Therefore, the necessary and sufficient conditions for unique identification directly translate. The pair $(\theta^*, \tau^*)$ is uniquely identifiable if and only if:
\begin{enumerate}
    \item The aggregated matrix $X^*$ has full column rank (rank $d$).
    \item The aggregated log-ratio vector $y^* \neq 0$.
\end{enumerate}
The proof follows exactly the steps outlined in Appendix \ref{app:proof_identifiability}, applied to the aggregated matrices $X^*$ and $y^*$.
\end{proof}

\subsection{Proof of Theorem \ref{thm:mg_guarantees} (Sample Complexity for MGs)}
\label{app:proof_mg}

The proof builds upon the matrix game analysis (Appendix \ref{app:proof_finite_sample}) and incorporates the complexities of dynamic programming, specifically the error propagation during reward recovery.

\begin{proof}
Let $K$ be the total number of samples. We assume $R=1$ WLOG.

\textbf{Uniform Policy Estimation Error.}
We first establish the uniform convergence of the QRE policy estimates $(\hat{\mu}(s), \hat{\nu}(s))$. We observe samples drawn from the stationary distribution $d^*(s)$ induced by the QRE. Under Assumption \ref{assump:coverage} ($d^*(s) \ge C_{\min} > 0$), the expected number of visits to state $s$ is approximately $K \cdot d^*(s) \ge K C_{\min}$.

We utilize concentration results for empirical estimates in Markov chains (or assume access to a generative model allowing sampling from $d^*$). This ensures that the empirical estimators converge uniformly across all states $s \in \cS$. By adapting the results from Lemma \ref{lem:policy_error} and applying a union bound over all states, we obtain that with probability at least $1-\delta$:
\begin{equation}
\sup_s (\|\hat{\mu}(s) - \mu^*(s)\|_1 + \|\hat{\nu}(s) - \nu^*(s)\|_1) = \mathcal{O}\left(\sqrt{\frac{(m+n)\log(|\cS|m n/\delta)}{K C_{\min}}}\right).
\end{equation}
We denote this uniform error rate as $\epsilon_K$.

\textbf{Estimation of $\theta^*$ and $\tau^*$.}
The estimation of $\theta^*$ and $\tau^*$ relies on the aggregated system $\hat{X} \hat{\theta}_{LS} = \hat{y}$. The aggregated matrices $\hat{X}$ and $\hat{y}$ are formed by stacking the local estimates $\hat{X}(s)$ and $\hat{y}(s)$.

We analyze the perturbation of the aggregated system.
\begin{align}
    \|\hat{X} - X^*\|_{op} &\le \|\hat{X} - X^*\|_{F} = \sqrt{\sum_s \|\hat{X}(s) - X^*(s)\|_F^2}.
\end{align}
Using the analysis from Lemma \ref{lem:perturbation_xy}, the local perturbation $\|\hat{X}(s) - X^*(s)\|_F$ is bounded by $C_X(s) \epsilon_K(s)$, where $\epsilon_K(s)$ is the local policy error at state $s$. Since we established uniform convergence $\epsilon_K$, and $C_X$ depends on $L, m, n$ (which are uniform across states), we have:
\begin{equation}
    \|\hat{X} - X^*\|_{op} \le \sqrt{|\cS| C_X^2 \epsilon_K^2} = \sqrt{|\cS|} C_X \epsilon_K.
\end{equation}
Similarly, $\|\hat{y} - y^*\|_2 \le \sqrt{|\cS|} C_Y \epsilon_K$.

We now apply the analysis of Theorem \ref{thm:finite_sample} (Appendix \ref{app:proof_finite_sample}) to the aggregated system. Provided $K$ is sufficiently large such that $\sigma_{\min}(\hat{X})$ is well-conditioned (relative to $\sigma_{\min}(X^*)$), the error in the LS solution $\hat{\theta}_{LS}$ compared to $\theta'^* = \theta^*/\tau^*$ is:
\begin{equation}
    \|\hat{\theta}_{LS} - \theta'^*\|_2 = \mathcal{O}(\sqrt{|\cS|} \epsilon_K).
\end{equation}
Applying the normalization stability (Lemma \ref{lem:normalization_stability}), we obtain the bounds for the parameters:
\begin{equation}
\|\hat{\theta} - \theta^*\|_2 = \mathcal{O}(\sqrt{|\cS|} \epsilon_K), \quad |\hat{\tau} - \tau^*| = \mathcal{O}(\sqrt{|\cS|} \epsilon_K).
\end{equation}
This confirms the rates stated in the theorem for $\hat{\theta}$ and $\hat{\tau}$ (noting the definition of $\epsilon_K$).

\textbf{Error Propagation in Reward Recovery.}
The crucial step is analyzing the error in the recovered rewards $\hat{r}$. We assume the true dynamics $P$ are used for recovery (as stated in the theorem assumptions).
\begin{equation}
\hat{r}(s,a,b) = \hat{Q}(s,a,b) - \gamma \E_{s' \sim P(\cdot|s,a,b)}[\hat{V}(s')].
\end{equation}
The error decomposition is:
\begin{equation}
\hat{r} - r^* = (\hat{Q} - Q^*) - \gamma P(\hat{V} - V^*).
\end{equation}
Where $P(\hat{V}-V^*)$ denotes the vector of expected value differences.

\textit{Bounding Q-function Error.}
\begin{equation}
    \|\hat{Q} - Q^*\|_\infty = \sup_{s,a,b} |\langle \phi(s,a,b), \hat{\theta} - \theta^* \rangle| \le L \|\hat{\theta} - \theta^*\|_2 = \mathcal{O}(\sqrt{|\cS|} \epsilon_K).
\end{equation}

\textit{Bounding V-function Error.}
We need to bound $\|\hat{V} - V^*\|_\infty$. The V-function at a state $s$ is defined as a function of the local components:
\begin{equation}
    V(s) = F(Q(s), \mu(s), \nu(s), \tau) = \mu^\top Q \nu + \tau(\cH(\mu) - \cH(\nu)).
\end{equation}
The estimated V-function $\hat{V}(s)$ is calculated using the estimated components $(\hat{Q}(s), \hat{\mu}(s), \hat{\nu}(s), \hat{\tau})$.

We analyze the Lipschitz continuity of the function $F$. Provided the policies are bounded away from zero (by $\xi>0$, Assumption \ref{assump:soft_gap}), $F$ is Lipschitz continuous with respect to its arguments.

We analyze the sensitivity of $F$ to its inputs. Let $\Delta V = \hat{V}(s) - V^*(s)$.
\begin{align*}
    \Delta V &= (\hat{\mu}^\top \hat{Q} \hat{\nu} - \mu^{*\top} Q^* \nu^*) + (\hat{\tau}\cH(\hat{\mu}) - \tau^*\cH(\mu^*)) - (\hat{\tau}\cH(\hat{\nu}) - \tau^*\cH(\nu^*)).
\end{align*}

We analyze the first term (the expected payoff):
\begin{align*}
    \hat{\mu}^\top \hat{Q} \hat{\nu} - \mu^{*\top} Q^* \nu^* &= \hat{\mu}^\top (\hat{Q}-Q^*) \hat{\nu} + \hat{\mu}^\top Q^* \hat{\nu} - \mu^{*\top} Q^* \nu^* \\
    &= \hat{\mu}^\top (\hat{Q}-Q^*) \hat{\nu} + (\hat{\mu}-\mu^*)^\top Q^* \hat{\nu} + \mu^{*\top} Q^* (\hat{\nu}-\nu^*).
\end{align*}
The magnitude is bounded by:
\begin{align*}
    |\hat{\mu}^\top \hat{Q} \hat{\nu} - \mu^{*\top} Q^* \nu^*| &\le \|\hat{Q}-Q^*\|_\infty + \|Q^*\|_\infty (\|\hat{\mu}-\mu^*\|_1 + \|\hat{\nu}-\nu^*\|_1) \\
    &\le \|\hat{Q}-Q^*\|_\infty + L \cdot \epsilon_K. \quad (\text{Since } \|Q^*\|_\infty \le L, R=1)
\end{align*}

We analyze the entropy terms. The entropy function $\cH(\pi)$ is Lipschitz continuous when $\pi_i \ge \xi/2$. The Lipschitz constant is $L_\cH \approx \log(1/\xi)$.
\begin{align*}
    |\hat{\tau}\cH(\hat{\mu}) - \tau^*\cH(\mu^*)| &= |\hat{\tau}(\cH(\hat{\mu})-\cH(\mu^*)) + (\hat{\tau}-\tau^*)\cH(\mu^*)| \\
    &\le |\hat{\tau}| L_\cH \|\hat{\mu}-\mu^*\|_1 + |\hat{\tau}-\tau^*| \log(m).
\end{align*}

Combining these bounds, we see that $\|\hat{V} - V^*\|_\infty$ is bounded by a linear combination of the errors in the estimated components:
\begin{align*}
\|\hat{V} - V^*\|_\infty &\le L_Q \|\hat{Q} - Q^*\|_\infty + L_\tau |\hat{\tau} - \tau^*| + L_{\mu,\nu} \epsilon_K.
\end{align*}
The errors in $\hat{Q}$ and $\hat{\tau}$ are $\mathcal{O}(\sqrt{|\cS|}\epsilon_K)$. Since this dominates $\mathcal{O}(\epsilon_K)$, we conclude that:
\begin{equation}
    \|\hat{V} - V^*\|_\infty = \mathcal{O}(\sqrt{|\cS|}\epsilon_K).
\end{equation}

\textbf{Final Reward Error Bound.}
We combine the errors to bound the reward recovery error:
\begin{align*}
\|\hat{r} - r^*\|_\infty &\le \|\hat{Q} - Q^*\|_\infty + \gamma \|P(\hat{V} - V^*)\|_\infty.
\end{align*}
Since $P$ is a stochastic transition matrix, applying $P$ is an averaging operation, so $\|P(\hat{V} - V^*)\|_\infty \le \|\hat{V} - V^*\|_\infty$.
\begin{align*}
\|\hat{r} - r^*\|_\infty &\le \|\hat{Q} - Q^*\|_\infty + \gamma \|\hat{V} - V^*\|_\infty \\
&= \mathcal{O}(\sqrt{|\cS|}\epsilon_K).
\end{align*}
Substituting the definition of $\epsilon_K$ completes the proof, confirming the $\mathcal{O}(K^{-1/2})$ rate for reward recovery in Markov games under the assumption of known dynamics.
\end{proof}

\section{Experimental Details and Additional Results}
\label{app:exp_details}

\subsection{Experimental Setup Details}
\label{app:exp_setup}

This section provides further details on the experimental setup described in Section \ref{sec:experiments}.

\textbf{Matrix Games Generation.}
For the matrix game experiments (Sections 6.2, 6.3), we set the dimensions $m=10, n=10$ and the feature dimension $d=5$. The features $\phi(a,b) \in \R^d$ were generated by sampling each element independently from a standard normal distribution $\mathcal{N}(0, 1)$. The true reward parameter $\theta^* \in \R^d$ was also sampled from $\mathcal{N}(0, 1)$ and then normalized such that $\|\theta^*\|_2 = 1$ (Normalization constant $C=1$), unless otherwise specified (e.g., Section 4.5). The true temperature was fixed at $\tau^*=2.0$. The payoff matrix $Q^*$ was computed as $Q^*(a,b) = \langle \phi(a,b), \theta^* \rangle$. 

The QRE $(\mu^*, \nu^*)$ was computed by solving the fixed-point equations (Eq. \ref{eq:qre_mu}, \ref{eq:qre_nu}) using iterative updates until convergence (tolerance $10^{-9}$), falling back to a numerical root-finding solver if iterations failed. Samples were drawn directly from the computed QRE.

\textbf{Markov Games Generation.}
For the Markov game experiments (Sections 6.2, 6.4), we set $|\cS|=8, m=5, n=5, d=6$, and $\gamma=0.9$. We adopted the Linear Q-function parameterization (Assumption \ref{assump:linear_q}). Features $\phi(s,a,b) \in \R^d$ were sampled from $\mathcal{N}(0, 1)$. The true parameter $\theta^*$ was sampled from $\mathcal{N}(0, 1)$ and normalized to $\|\theta^*\|_2 = 1$ (Normalization constant $R=1$). The true temperature was $\tau^*=1.5$. The QRE Q-function $Q^*$ was defined as $Q^*(s,a,b) = \langle \phi(s,a,b), \theta^* \rangle$. The QRE policies $(\mu^*, \nu^*)$ were computed from $Q^*$ using the logit response equations (using the same solver approach as Matrix Games). The V-function $V^*$ was computed using Eq. \ref{eq:mg_v}.

The transition dynamics $P(s'|s,a,b)$ were generated randomly by sampling from a Dirichlet distribution $\text{Dir}(1)$ for each $(s,a,b)$. The true reward function $r^*$ was then computed by inverting the Bellman equation (Eq. \ref{eq:mg_q}): $r^*(s,a,b) = Q^*(s,a,b) - \gamma \E_{s' \sim P(\cdot|s,a,b)}[V^*(s')]$.

Samples were generated using a generative model approach, where a fixed number of samples ($N_{\text{per\_state}}$) were drawn from the QRE policies $(\mu^*, \nu^*)$ independently at each state $s \in \cS$. This ensures uniform coverage across the state space. $K = |\cS| \cdot N_{\text{per\_state}}$ denotes the total number of state-action-next state tuples collected.

\subsection{Robustness to Feature Misspecification}
\label{app:feature_misspec}

A critical assumption in IGT/IRL is that the feature map $\phi$ (Assumption \ref{assump:linear_payoff}) perfectly captures the relevant factors influencing the agents' utilities. We analyze the robustness of Blind-IGT when this assumption is violated, specifically when the feature map used for estimation is incomplete.

\subsubsection{Setup}
We use the Matrix Game setup ($m=n=10, d=5, \tau^*=2.0$) but provide the NLS estimator with only $d_{est}=4$ features (omitting one true feature). We measure (1) the Directional Error (measured as $1 - \text{cosine similarity}$) between the estimated $\hat{\theta}$ (in the $d_{est}$ space) and the parameters that best approximate the true Q-values within the restricted feature space (obtained via least-squares projection, i.e., $\arg\min_{\theta} \|\Phi_{est}\theta - Q^*\|_2^2$), and (2) the Behavioral Error (Total Variation distance) between the policies induced by the estimated parameters and the true QRE policies.

\subsubsection{Results}

\begin{figure}[htbp]
    \centering
    \includegraphics[width=0.8\linewidth]{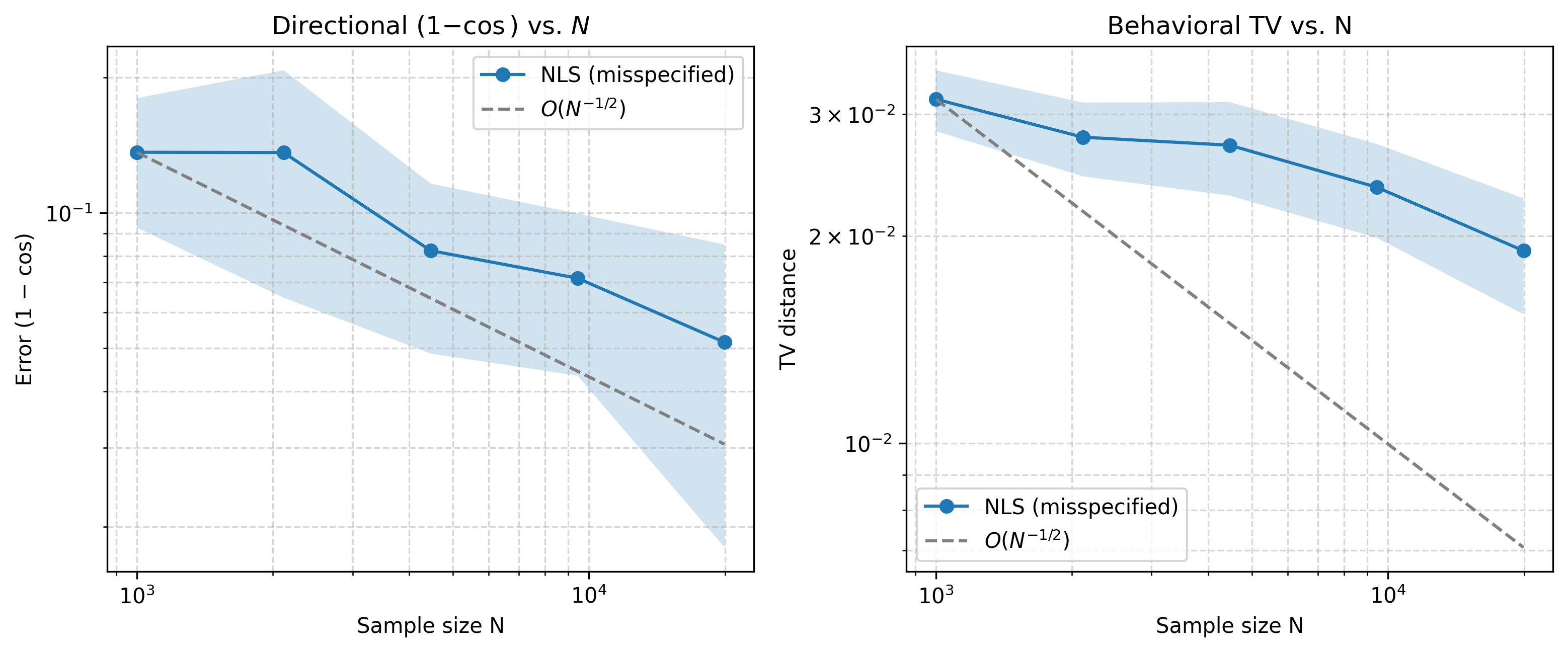}
    \caption{Robustness to Feature Misspecification (omitting 1 of 5 features). Both the directional error (measured as $1-\text{cosine similarity}$) (Left) and the behavioral error (Right) decrease with sample size N, suggesting that Blind-IGT remains effective for behavioral modeling even under mild misspecification.}
    \label{fig:feature_misspecification_appendix}
\end{figure}

Figure \ref{fig:feature_misspecification_appendix} shows the results. Both the directional error and the behavioral error decrease as the sample size $N$ increases. While the errors are naturally higher than in the perfectly specified case (Figure \ref{fig:teaser_convergence}), the behavioral error remains relatively small. This indicates that even when the underlying utility model is misspecified, Blind-IGT can still recover parameters that accurately model the observed behavior.

\end{document}